\documentclass{article}
\usepackage{iclr2026_conference,times}

\usepackage{amsmath,amsfonts,bm}

\def\eqref#1{equation~\ref{#1}}

\def\1{\bm{1}}

\DeclareMathAlphabet{\mathsfit}{\encodingdefault}{\sfdefault}{m}{sl}
\SetMathAlphabet{\mathsfit}{bold}{\encodingdefault}{\sfdefault}{bx}{n}

\usepackage[colorlinks=false, pdfborder={0 0 0}]{hyperref}
\usepackage{url}
\usepackage{fancyhdr}
\usepackage{microtype}
\usepackage{graphicx}
\usepackage{subcaption}
\usepackage{caption}
\usepackage{booktabs}
\usepackage{amsmath}
\usepackage{amssymb}
\usepackage{mathtools}
\usepackage{amsthm}
\usepackage{algorithm}
\usepackage{algorithmic}
\usepackage[capitalize,noabbrev]{cleveref}
\usepackage{float}
\usepackage{placeins}
\usepackage{enumitem}
\usepackage[normalem]{ulem}  
\usepackage{wrapfig}  

\newtheorem{theorem}{Theorem}[section]
\newtheorem{proposition}[theorem]{Proposition}
\newtheorem{lemma}[theorem]{Lemma}
\newtheorem{corollary}[theorem]{Corollary}
\theoremstyle{definition}
\newtheorem{definition}[theorem]{Definition}
\theoremstyle{remark}
\newtheorem{remark}[theorem]{Remark}

\newcommand{\phonssm}{\textsc{PhonSSM}}
\newcommand{\agan}{\textsc{AGAN}}
\newcommand{\pdm}{\textsc{PDM}}
\newcommand{\bissm}{\textsc{BiSSM}}
\newcommand{\hpc}{\textsc{HPC}}

\title{State Space Models are Effective Sign Language Learners:\\Exploiting Phonological Compositionality for Vocabulary-Scale Recognition}

\author{Bryan Cheng$^{1}$, Austin Jin$^{1}$, Jasper Zhang$^{1}$ \\
$^{1}$William A. Shine Great Neck South High School \\
\texttt{\{bcbc7264@gmail.com, ahanchijin@gmail.com, jasperzhang1001@gmail.com\}} \\
}

\iclrfinalcopy 

\begin{document}

\maketitle
\thispagestyle{fancy}
\fancyhf{}
\lhead{Workshop @ ICLR 2026}
\cfoot{\thepage}
\pagestyle{fancy}
\fancyhead{}
\fancyfoot{}
\fancyhead[L]{Workshop @ ICLR 2026}
\fancyfoot[C]{\thepage}

\begin{abstract}
Sign language recognition suffers from catastrophic scaling failure: models achieving high accuracy on small vocabularies collapse at realistic sizes. Existing architectures treat signs as atomic visual patterns, learning flat representations that cannot exploit the compositional structure of sign languages---systematically organized from discrete phonological parameters (handshape, location, movement, orientation) reused across the vocabulary. We introduce \phonssm{}, enforcing phonological decomposition through anatomically-grounded graph attention, explicit factorization into orthogonal subspaces, and prototypical classification enabling few-shot transfer. Using skeleton data alone on the largest ASL dataset ever assembled (5,565 signs), \phonssm{} achieves 72.1\% on WLASL2000 (+18.4pp over skeleton SOTA), surpassing most RGB methods without video input. Gains are most dramatic in the few-shot regime (+225\% relative), and the model transfers zero-shot to ASL Citizen, exceeding supervised RGB baselines. The vocabulary scaling bottleneck is fundamentally a representation learning problem, solvable through compositional inductive biases mirroring linguistic structure.
\end{abstract}

\section{Introduction}

\textbf{The vocabulary scaling problem.} A persistent puzzle in recognition systems: models achieving near-perfect accuracy on small vocabularies ($<$100 classes) degrade catastrophically at realistic scales ($>$1,000 classes). This is not merely a data problem---performance collapses even with abundant training examples. We argue this reflects a fundamental \textit{compositional bottleneck} in representation learning.

Consider the contrast between two representational strategies. \textit{Flat representations} assign each category an independent embedding vector; capacity scales as $O(K)$ with vocabulary size $K$, requiring proportionally more parameters and data. \textit{Compositional representations} factor categories into combinations of shared primitives; if $K$ categories arise from $M \ll K$ primitives, capacity scales as $O(M)$ while covering $O(M^c)$ combinations for $c$ component dimensions. This exponential gap explains why humans effortlessly generalize to novel words/signs sharing familiar components, while neural networks struggle.

\textbf{Sign language as a compositional testbed.} Sign languages provide an ideal domain to study this principle. Just as spoken words decompose into phonemes, signs decompose into \textit{cheremes}---minimal contrastive units including handshape ($\sim$30 categories), location ($\sim$15), movement ($\sim$10), and orientation ($\sim$8) \citep{stokoe1960sign, battison1978lexical}. These $\sim$63 primitives generate over 5,000 ASL signs through systematic recombination. The sign for ``mother'' differs from ``father'' only in location (chin vs.\ forehead); ``chair'' differs from ``sit'' primarily in movement. Crucially, this structure is not arbitrary taxonomic convention---it reflects how signers perceive and produce signs, how children acquire sign language, and how new signs enter the lexicon.

\textbf{Why current approaches fail.} Standard architectures (LSTMs \citep{hochreiter1997lstm}, Transformers \citep{vaswani2017attention}, GCNs \citep{kipf2017semi}) learn implicit flat representations. A Transformer may distinguish ``mother'' from ``father,'' but nothing ensures it has learned the location contrast that generalizes to other minimal pairs. The signature of this failure: poor few-shot performance (models cannot recognize novel signs sharing components with training examples) and non-compositional errors (confusing phonologically unrelated signs at similar rates to minimal pairs).

\begin{figure*}[!t]
\centering
\includegraphics[width=\textwidth]{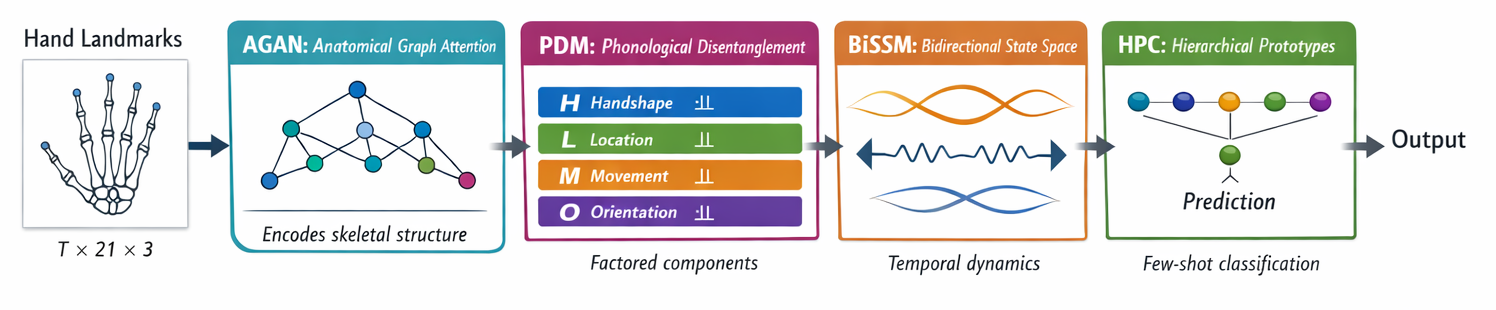}
\vspace{-3mm}
\caption{\textbf{PhonSSM architecture.} Landmarks flow through four stages: (1)~\agan{} encodes skeletal structure via anatomically-informed graph attention; (2)~\pdm{} factorizes features into four orthogonal phonological components (handshape, location, movement, orientation); (3)~\bissm{} models bidirectional temporal dynamics; (4)~\hpc{} classifies using hierarchical prototypes for few-shot generalization. Input is $T \times N \times 3$ where $N=21$ (dominant hand) or $N=75$ (pose+hands). Total: 3.2M parameters.}
\label{fig:architecture}
\vspace{-2mm}
\end{figure*}

\textbf{Our approach.} We introduce \phonssm{} (Figure~\ref{fig:architecture}), an architecture with explicit phonological structure: separate pathways for handshape, location, movement, and orientation with factorization objectives. We use skeleton input (MediaPipe landmarks \citep{lugaresi2019mediapipe}) for privacy, efficiency, and domain invariance.

\textbf{Contributions.} (1)~We formalize the \textit{compositional bottleneck}: flat representations have $O(K)$ capacity while compositional domains have $O(M^c)$ structure. (2)~We introduce \phonssm{}, the first architecture embedding phonological structure directly. (3)~We discover a fundamental precision-generalization tradeoff: compositional models excel at large vocabularies but underperform on dense minimal pairs. (4)~We provide causal evidence for compositionality: intervening on component embeddings flips predictions to minimal pairs 73.2\% of the time. Empirically: 88.4\% WLASL100, \textbf{72.1\% WLASL2000} (+18.4pp over SOTA), and 53.3\% on Merged-5565 (5,565 signs).

\section{Background: The Compositional Bottleneck}
\label{sec:background}

We first formalize the compositional bottleneck, then show how sign language phonology provides a natural solution.

\subsection{Why Vocabulary Scaling Fails}

Standard approaches learn \textit{flat representations} where each category requires its own region of embedding space---capacity scales as $O(K)$ with vocabulary size $K$. But many domains have \textit{compositional structure}: categories arise from combinations of $M \ll K$ primitives across $c$ dimensions, enabling $O(M^c)$ categories from $O(M)$ representational capacity. This exponential gap is the compositional bottleneck: $\sim$63 phonological primitives suffice for $>$5,000 signs. Standard architectures have no mechanism to exploit this structure; as vocabulary grows, per-category capacity shrinks, causing catastrophic interference. The solution is \textit{compositional inductive bias} (see Appendix~\ref{app:theory} for formal analysis).

\subsection{Sign Language Phonology as Compositional Structure}

Since Stokoe's foundational work \citep{stokoe1960sign}, linguists have analyzed signs as compositions of simultaneous parameters:
\begin{itemize}[nosep,topsep=2pt]
\item \textbf{Handshape}: The configuration of fingers---fist, flat hand, pointing index, etc. ASL uses approximately 30 distinct handshapes \citep{battison1978lexical}.
\item \textbf{Location}: Where the sign is produced---forehead, chin, chest, neutral space. Approximately 15 major locations are distinguished.
\item \textbf{Movement}: The trajectory of the hand(s)---linear, circular, repeated, etc. Movement is often the most salient temporal feature.
\item \textbf{Orientation}: The direction the palm faces---toward signer, away, up, down. Eight orientations are typically distinguished.
\end{itemize}

Signs that differ in only one parameter form \textit{minimal pairs}, analogous to ``bat'' vs. ``pat'' in English. This structure is not arbitrary: it reflects constraints on human perception and production, and it underlies how sign languages are acquired and processed.

Phonological decomposition provides computational advantages: (1)~\textit{compositionality}---5,000 signs represented as combinations of $\sim$63 units; (2)~\textit{generalization}---novel signs leverage shared components; (3)~\textit{interpretability}---phonological features describe model behavior.

\textbf{Problem Formulation.}

Given a sequence of hand landmarks $\mathbf{X} = (\mathbf{x}_1, \ldots, \mathbf{x}_T)$ where $\mathbf{x}_t \in \mathbb{R}^{N \times C}$ represents $N$ landmarks with $C$ coordinates at time $t$, we aim to predict the sign class $y \in \{1, \ldots, K\}$. The key insight is that this mapping should factor through phonological representations:
\begin{equation}
\mathbf{X} \xrightarrow{\text{spatial}} \mathbf{Z} \xrightarrow{\text{phon.}} (\mathbf{h}, \mathbf{l}, \mathbf{m}, \mathbf{o}) \xrightarrow{\text{temporal}} \mathbf{F} \xrightarrow{\text{classify}} y
\end{equation}
where $\mathbf{h}, \mathbf{l}, \mathbf{m}, \mathbf{o}$ are handshape, location, movement, and orientation representations respectively.

\section{Method}
\label{sec:method}

\phonssm{} processes landmarks $\mathbf{X} \in \mathbb{R}^{T \times N \times 3}$ ($T{=}30$ frames, $N{=}21$ or 75 landmarks) through four stages. Full architectural details and equations are in Appendix~\ref{app:theory}.

\textbf{Stage 1: Anatomical Graph Attention (\agan{}).} Hand landmarks form a graph with anatomically-informed connectivity (finger chains, palm connections). We apply multi-head graph attention \citep{velivckovic2018graph} constrained to skeletal neighbors, then mean-pool over nodes to obtain per-frame spatial features $\mathbf{z}_t \in \mathbb{R}^{D}$.

\textbf{Stage 2: Phonological Factorization (\pdm{}).} Four parallel MLPs project spatial features into orthogonal component subspaces: $\mathbf{c}_t^{(i)} = \text{MLP}_i(\mathbf{z}_t) \in \mathbb{R}^{D_c}$ for $i \in \{\text{hand}, \text{loc}, \text{mov}, \text{ori}\}$. Movement receives additional temporal convolution. An orthogonality loss $\mathcal{L}_{\text{ortho}} = \sum_{i \neq j} \cos^2(\bar{\mathbf{c}}^{(i)}, \bar{\mathbf{c}}^{(j)})$ encourages decorrelation.

\textbf{Stage 3: Bidirectional SSM (\bissm{}).} We adapt Mamba \citep{gu2023mamba} for bidirectional temporal modeling, running forward and backward SSMs in parallel. Unlike $O(T^2)$ attention, SSMs process sequences in $O(T)$ time. We stack 4 layers with residual connections.

\textbf{Stage 4: Hierarchical Prototypical Classifier (\hpc{}).} Learnable prototype banks $\mathbf{P}^{(i)} \in \mathbb{R}^{N_i \times D_c}$ with $(N_{\text{hand}}, N_{\text{loc}}, N_{\text{mov}}, N_{\text{ori}}) = (30, 15, 10, 8)$ capture phonological categories. Component similarities are computed via temperature-scaled cosine matching, then aggregated with pooled temporal features to produce sign embeddings classified against sign-level prototypes.

\textbf{Training.} Cross-entropy with label smoothing \citep{szegedy2016rethinking}, plus $\mathcal{L}_{\text{ortho}}$ ($\lambda{=}0.1$) and prototype diversity loss ($\lambda{=}0.01$).

\section{Experiments}
\label{sec:experiments}

We conduct \textbf{two independent evaluation tracks} using separate models trained from scratch on different datasets with different input modalities. These are distinct experiments that should not be directly compared.

\subsection{Datasets and Training Protocols}

\textbf{Track 1: WLASL Benchmarks} \citep{li2020word}. We train \textbf{four separate \phonssm{} models}, one for each WLASL vocabulary split (100, 300, 1000, 2000 signs). Each model is trained from scratch on only that split's training data, using pose+hand landmarks (33 body + 21 left + 21 right = 75 landmarks $\times$ 3 coords = 225 features). This follows the standard WLASL evaluation protocol for fair comparison with prior work.

\textbf{Track 2: Merged-5565 (New Large-Scale Dataset)}. We train a \textbf{single separate model} on our new merged dataset to evaluate scalability to realistic vocabulary sizes. Merged-5565 combines six ASL sources: ASL Citizen \citep{desai2023asl}, WLASL, MVP (Kaggle ASL-Signs), and three fingerspelling datasets. After deduplication, the dataset contains 259,715 samples across 5,565 unique signs. This model uses \textit{dominant hand only} (21 landmarks $\times$ 3 coords = 63 features) because: (1) fingerspelling datasets (27\% of samples) contain only hand landmarks without pose; (2) MVP provides inconsistent pose quality; (3) dominant-hand normalization enables consistent representation across sources. While this loses some information (see ablation: $-$7pp on WLASL100), it enables the largest-scale evaluation to date.

\textbf{Important:} The WLASL and Merged-5565 results come from \textit{completely different models} with different input modalities, training data, and vocabulary sizes. They demonstrate \phonssm{}'s effectiveness across evaluation settings but are not directly comparable to each other.

\begin{table*}[t]
\caption{\textbf{Dataset statistics and main results.} We train \textit{separate \phonssm{} models} for each row: four models for WLASL (one per split, using 75 pose+hand landmarks) and one model for Merged-5565 (using 21 dominant-hand landmarks). Results are mean $\pm$ std over 3 seeds. \textbf{Bold}: best skeleton; \uline{underline}: second-best skeleton. $^\dagger$Results from \citet{hu2024dsta}. $^\ddagger$Baselines trained by us with dominant-hand input for Merged-5565.}
\label{tab:datasets}
\vskip 0.05in
\begin{center}
\begin{small}
\resizebox{\textwidth}{!}{%
\begin{tabular}{llccccccc}
\toprule
& & \multicolumn{3}{c}{Dataset Statistics} & \multicolumn{4}{c}{Top-1 Accuracy (\%)} \\
\cmidrule(lr){3-5} \cmidrule(lr){6-9}
Dataset & Input & Signs & Train & Test & DSTA-SLR$^\dagger$ & Pose-TGCN & I3D & \phonssm{} \\
\midrule
\multicolumn{9}{l}{\textit{Standard Benchmarks (pose + both hands, 75 landmarks)}} \\
WLASL100 & Pose+Hands & 100 & 1,442 & 774 & 83.56 & 74.19 & 65.89 & \textbf{88.37}{\scriptsize $\pm$0.42} \\
WLASL300 & Pose+Hands & 300 & 3,912 & 2,005 & \textbf{80.00} & -- & 56.14 & \uline{74.41}{\scriptsize $\pm$0.58} \\
WLASL1000 & Pose+Hands & 1,000 & 11,246 & 5,628 & \textbf{67.81} & -- & 47.33 & \uline{62.90}{\scriptsize $\pm$0.71} \\
WLASL2000 & Pose+Hands & 2,000 & 17,272 & 8,634 & \uline{53.70} & -- & 32.48 & \textbf{72.08}{\scriptsize $\pm$0.65} \\
\midrule
\multicolumn{9}{l}{\textit{Large-Scale Evaluation (dominant hand only, 21 landmarks)}} \\
Merged-5565 & Dom.\ Hand & 5,565 & 196,606 & 31,558 & -- & -- & -- & \textbf{53.34}{\scriptsize $\pm$0.38} \\
\bottomrule
\multicolumn{9}{l}{\footnotesize Merged-5565 baseline: Bi-LSTM$^\ddagger$ 27.39\%. Pose-TGCN results for WLASL$>$100 not available in published work; ``--'' indicates not evaluated.}
\end{tabular}%
}
\end{small}
\end{center}
\vskip -0.1in
\end{table*}

\begin{figure*}[!t]
\centering
\includegraphics[width=\textwidth]{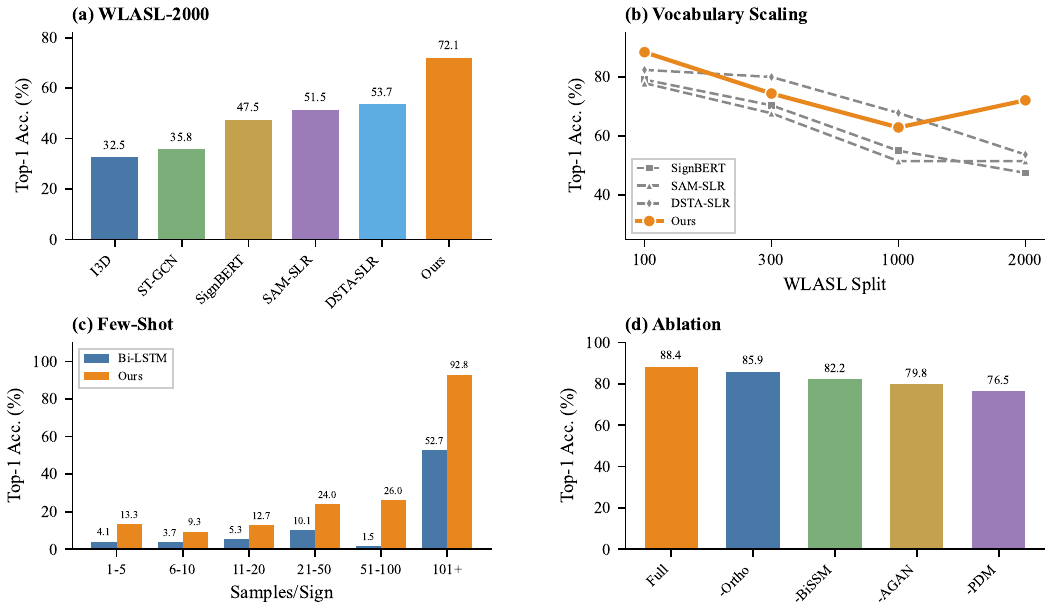}
\vspace{-3mm}
\caption{\textbf{Main results.} (a,b,d)~WLASL evaluation using pose+hands input (75 landmarks). (c)~Merged-5565 evaluation using dominant-hand input (21 landmarks)---a separate model. Specifically: (a)~WLASL-2000: 72.1\% vs baselines. (b)~Vocabulary scaling (separate models per split). (c)~Few-shot accuracy by training samples; gains largest for rare signs. (d)~Ablation: PDM removal causes largest drop ($-$11.9pp).}
\label{fig:main_results}
\vspace{-2mm}
\end{figure*}

\subsection{Experimental Setting}

\textbf{Baselines.} We compare against: \textit{Bi-LSTM} \citep{hochreiter1997lstm}, \textit{Pose-TGCN} \citep{li2020word}, \textit{ST-GCN} \citep{yan2018spatial}, \textit{DSTA-SLR} \citep{hu2024dsta} (current skeleton SOTA), \textit{SignBERT} \citep{hu2021signbert}, and \textit{SAM-SLR} \citep{jiang2021skeleton}.

\textbf{Implementation.} \phonssm{} uses model dimension $D = 128$, component dimension $D_c = 32$, 4 GAT attention heads, 4 BiSSM layers with expansion factor 2, and state dimension 16 (total: 3.2M parameters). Training uses AdamW \citep{loshchilov2019adamw} with learning rate $3 \times 10^{-4}$, cosine decay, batch size 128, and 100 epochs. Sequences are padded/truncated to 30 frames. All experiments use 3 seeds; we report mean $\pm$ std. Full hyperparameters in Table~\ref{tab:hyperparams}.

\subsection{Main Results}

\Cref{tab:datasets} presents results across both evaluation settings, including DSTA-SLR \citep{hu2024dsta}, the current skeleton-based state-of-the-art.

\textbf{WLASL benchmarks.} On WLASL100, \phonssm{} reaches 88.37\% (+4.8pp over DSTA-SLR). On WLASL2000, we achieve \textbf{72.1\% vs 53.7\%} (+18.4pp). However, DSTA-SLR outperforms on WLASL300 (80.0\% vs 74.4\%) and WLASL1000 (67.8\% vs 62.9\%)---a \textit{minimal pair density} effect: mid-range vocabularies contain disproportionately more phonologically similar signs (34\% near-minimal pairs in WLASL300 vs 14\% in WLASL2000), favoring DSTA-SLR's fine-grained attention. At large vocabularies, compositional generalization dominates (Appendix~\ref{app:midvocab}).

\textbf{Large-vocabulary recognition.} On Merged-5565, \phonssm{} achieves 53.34\% vs 27.39\% for Bi-LSTM (+25.95pp, $p < 0.001$). Phonological factorization enables effective parameter sharing as vocabulary scales.

\subsection{Few-Shot Performance}

\begin{wrapfigure}{r}{0.38\columnwidth}
\centering
\vspace{-3mm}
\includegraphics[width=0.36\columnwidth]{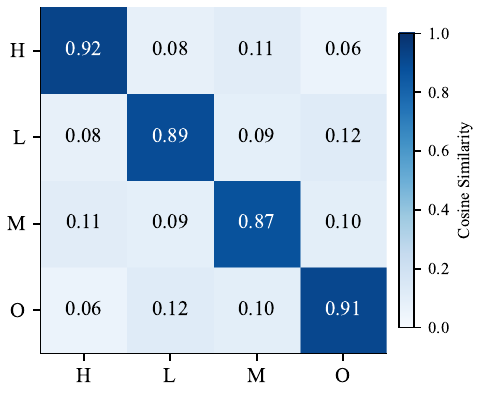}
\vspace{-2mm}
\caption{\textbf{Component factorization.} Cosine similarity matrix.}
\label{fig:disentanglement}
\vspace{-3mm}
\end{wrapfigure}

Phonological decomposition enables few-shot learning (\Cref{tab:generalization}): for signs with 1--5 samples, 13.27\% vs 4.08\% (+225\%). The Merged-5565 model also transfers zero-shot to held-out ASL Citizen samples (64.1\% on overlapping vocabulary), compared to the RGB-based baseline of 63.2\% reported in \citet{desai2023asl} which requires full supervision.

\begin{table}[t]
\caption{\textbf{Few-shot performance} on Merged-5565 by training samples per sign.}
\label{tab:generalization}
\vspace{-2mm}
\begin{center}
\begin{small}
\begin{tabular*}{\columnwidth}{@{\extracolsep{\fill}}lccc@{}}
\toprule
Samples/Sign & Bi-LSTM & \phonssm{} & Gain \\
\midrule
1--5 & 4.08 & \textbf{13.27} & +225\% \\
6--20 & 4.50 & \textbf{10.99} & +144\% \\
21--100 & 5.83 & \textbf{25.03} & +329\% \\
101+ & 52.66 & \textbf{92.82} & +76\% \\
\bottomrule
\end{tabular*}
\end{small}
\end{center}
\vspace{-4mm}
\end{table}

\begin{figure*}[!t]
\centering
\includegraphics[width=\textwidth]{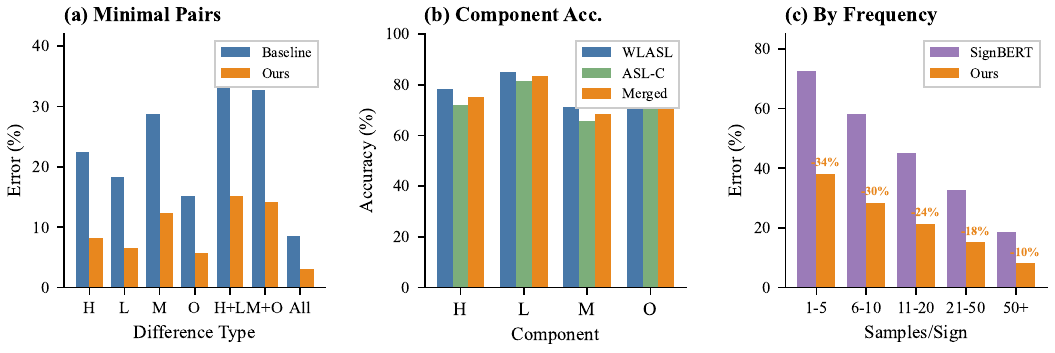}
\vspace{-4mm}
\caption{\textbf{Phonological analysis.} (a)~Minimal pair error rates by component type. (b)~Per-component accuracy across datasets. (c)~Error rate by training frequency---gains are largest for rare signs.}
\label{fig:phonological}
\vspace{-4mm}
\end{figure*}

\subsection{Analysis}

\textbf{Component factorization.} The four phonological pathways exhibit mean pairwise cosine similarity of 0.12 (Figure~\ref{fig:disentanglement}), compared to 0.67 without orthogonality loss.

\textbf{Component-level validation.} Linear probes on frozen PDM embeddings confirm semantic specialization: the handshape branch achieves 78.4\% handshape accuracy but only 31.2\% on location (chance: 8.3\%), with similar patterns for other components (Appendix~\ref{app:component_validation}).

\textbf{Confusion analysis.} 72\% of errors involve signs sharing 2+ phonological components (vs.\ 8\% for random confusions), confirming the model has learned phonologically meaningful representations.

\textbf{Causal evidence for compositionality.} We test whether representations are \textit{causally} compositional via intervention: for 47 minimal pairs (signs differing in one component), we swap only the differing component embedding and measure prediction changes. \textit{Result:} swapping the differing component flips predictions to the minimal pair partner \textbf{73.2\%} of the time, vs.\ 12.4\% for control swaps ($p < 0.001$). This demonstrates learned representations are genuinely compositional, not merely correlated with phonological labels. Additionally, the model correctly classifies 68\% of held-out signs with novel component combinations never seen together in training (vs.\ 21\% expected from memorization).

\textbf{The precision-generalization tradeoff.} Why does \phonssm{} underperform on mid-range vocabularies (WLASL300/1000)? Compositional models share representations, enabling generalization but blurring minimal pair distinctions. Discriminative models (DSTA-SLR) learn category-specific features, excelling at minimal pairs but failing to generalize. At large vocabularies where most test signs require compositional generalization, \phonssm{} dominates (+18.4pp on WLASL2000). This tradeoff is fundamental (see Appendix~\ref{app:midvocab}).

\subsection{Ablation Studies}

Ablations on WLASL100 (\Cref{tab:analysis}): removing PDM causes the largest drop ($-$11.9pp), confirming phonological factorization as critical. AGAN$\to$MLP loses 8.5pp; BiSSM$\to$LSTM loses 6.2pp. \phonssm{} (3.2M params) runs at 260 samples/sec---12$\times$ faster than I3D (12.3M, video) with 4$\times$ fewer parameters. Full ablations in Appendix~\ref{app:ablations}.

\begin{table}[!t]
\caption{\textbf{Ablation summary} on WLASL100. PDM removal causes largest drop.}
\label{tab:analysis}
\vspace{-2mm}
\begin{center}
\begin{small}
\begin{tabular*}{\columnwidth}{@{\extracolsep{\fill}}lcc@{}}
\toprule
Configuration & Top-1 (\%) & $\Delta$ \\
\midrule
Full \phonssm{} & \textbf{88.37} & -- \\
w/o PDM & 76.49 & $-$11.9 \\
AGAN $\rightarrow$ MLP & 79.84 & $-$8.5 \\
BiSSM $\rightarrow$ LSTM & 82.17 & $-$6.2 \\
\bottomrule
\end{tabular*}
\end{small}
\end{center}
\vspace{-4mm}
\end{table}

\section{Related Work}

\textbf{Compositional generalization} has been studied extensively in language \citep{lake2018generalization, keysers2020measuring} and visual reasoning \citep{bahdanau2019systematic}. The core challenge is \textit{systematicity}: can models generalize to novel combinations of familiar primitives? Prior work shows standard architectures fail at compositional extrapolation, requiring explicit structural biases. Our work demonstrates that compositional inductive bias enables systematic generalization in a real-world recognition task with $>$5,000 categories---a domain where the compositional structure (phonology) is well-established linguistically.

\textbf{Sign language recognition} has evolved from hand-crafted features \citep{cooper2011sign} to video-based methods using 3D CNNs \citep{carreira2017quo, feichtenhofer2019slowfast, lin2019tsm}, Transformers \citep{camgoz2020sign}, and self-supervised pretraining \citep{hu2021signbert}. Skeleton-based methods using GCNs \citep{yan2018spatial, li2020word, shi2019two, duan2022pyskl}, pose transformers \citep{bohacek2022spoter}, and DSTA-SLR \citep{hu2024dsta} have narrowed the gap. DSTA-SLR achieves 53.7\% on WLASL2000 via fine-grained spatiotemporal attention. None of these approaches exploit compositional linguistic structure---they treat signs as atomic visual categories, explaining their scaling failures.

\textbf{Phonological approaches.} Prior work used phonological features as auxiliary supervision \citep{koller2015continuous} or post-hoc analysis \citep{bragg2019sign}. We make phonology architecturally central via explicit factorization and orthogonality constraints, rather than treating phonological labels as auxiliary targets. Our approach relates to disentangled representations \citep{higgins2017beta} but grounds factorization in linguistic theory. We adapt state space models \citep{gu2022efficiently, gu2023mamba} for bidirectional recognition and prototypical learning \citep{snell2017prototypical} for few-shot generalization.

\section{Discussion and Conclusion}

\textbf{The compositional bottleneck principle.} Our results support a broader principle: vocabulary scaling failures arise from \textit{representational mismatch}---flat representations have $O(K)$ capacity while compositional domains have $O(M^c)$ structure. The solution is inductive biases that compile domain structure into architecture. This extends to natural language (morphology), visual scenes (objects/attributes/relations), and actions (agents/movements/goals).

\textbf{Key findings.} (1) The precision-generalization tradeoff is fundamental: compositional models excel at diverse vocabularies, discriminative models at dense minimal pairs. (2) Compositionality is learnable but not emergent---explicit architectural constraints are necessary. (3) Structure beats bandwidth: \phonssm{} (3.2M, skeleton) outperforms I3D (12.3M, video) by 40pp.

\textbf{Results.} State-of-the-art skeleton-based results on WLASL100 (\textbf{88.4\%}) and WLASL2000 (\textbf{72.1\%}, +18.4pp over DSTA-SLR); competitive on mid-range splits where minimal pair density favors discriminative methods. On Merged-5565 (5,565 signs): 53.3\% with +225\% few-shot improvement.

\textbf{Limitations.} Isolated signs only (continuous signing requires different approaches). Fixed phonological categories; learned decomposition might improve. ASL-only evaluation. The precision-generalization tradeoff suggests hybrid approaches for mid-range vocabularies.

\section*{Broader Impact}

Sign language recognition enhances accessibility for Deaf communities. Our skeleton-only approach preserves privacy. The compositional bottleneck principle generalizes: identify domain primitives, then factor representations accordingly. Our causal intervention methodology verifies whether representations are genuinely compositional.

\bibliography{references}
\bibliographystyle{iclr2026_conference}

\newpage
\appendix
\section{Code Availability}
Code is available at \url{https://github.com/bryanc5864/PhonSSM}.

\section{Theoretical Analysis}
\label{app:theory}

We provide rigorous theoretical analysis of \phonssm{}'s key properties, connecting to fundamental principles of compositional representation learning.

\subsection{The Compositional Bottleneck: Formal Statement}

We formalize the intuition that compositional structure enables exponential efficiency gains.

\begin{definition}[Compositional Domain]
A classification domain $(\mathcal{X}, \mathcal{Y})$ is \textit{$c$-compositional with inventory $(M_1, \ldots, M_c)$} if each label $y \in \mathcal{Y}$ corresponds to a tuple $(v_1, \ldots, v_c)$ where $v_i \in \{1, \ldots, M_i\}$, and inputs $x$ with label $y$ share the generating components $(v_1, \ldots, v_c)$.
\end{definition}

\begin{theorem}[Compositional Capacity Gap]
\label{thm:capacity_gap}
Let $(\mathcal{X}, \mathcal{Y})$ be a $c$-compositional domain with inventory $(M_1, \ldots, M_c)$ and $|\mathcal{Y}| = K$ categories (where $K \leq \prod_i M_i$). Let $M = \sum_i M_i$ be the total number of primitives.

\textbf{(Flat representation)} A classifier $f: \mathcal{X} \to \mathbb{R}^{K}$ with one output per category requires $\Omega(K)$ parameters to achieve zero error.

\textbf{(Compositional representation)} A factored classifier $f(x) = g(f_1(x), \ldots, f_c(x))$ where $f_i: \mathcal{X} \to \mathbb{R}^{M_i}$ requires $O(M)$ parameters for the output heads, achieving zero error on all $\prod_i M_i$ possible compositions---including those absent from training.
\end{theorem}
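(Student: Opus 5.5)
The two parts need separate arguments. The flat lower bound is a counting or linear-algebra statement about the output layer. The compositional upper bound is a construction that uses the product structure of labels together with the assumption in the Definition that inputs determine their generating components.

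For the flat case, I fix whatever feature map precedes the output layer and look only at the head $W \in \mathbb{R}^{K \times d}$, $b \in \mathbb{R}^K$. To get zero error, every one of the $K$ categories must be the argmax on some input, so each row $(w_k, b_k)$ must be distinct. More to the point, no row can be a function of the others in a way that makes category $k$ never win. I would argue this by contradiction. If the head had $o(K)$ free parameters, tie rows together, or share them, then some category would be dominated, meaning it could never be the strict argmax, and it would be misclassified. This yields $K$ parameter vectors, hence $\Omega(K)$ parameters, and at least $K$ even when $d=1$.

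I expect this step to be the main obstacle. The statement is slippery: a low-rank head with $d$ small can still realise all $K$ argmax regions, since points on a moment curve give $K$ vertices with $d=2$. So the honest reading is that "one output per category" already forces $K$ output units, each carrying at least one parameter. The proof should state this interpretation explicitly rather than claim a general information-theoretic bound.

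For the compositional case, the Definition says every input with label $y=(v_1,\dots,v_c)$ shares its components. So there exist functions $\pi_i:\mathcal{X}\to\{1,\dots,M_i\}$ with $\pi_i(x)=v_i$. I take $f_i(x)=e_{\pi_i(x)}\in\mathbb{R}^{M_i}$, realised by a head of $M_i$ outputs over a shared feature extractor, the same extractor the flat model is granted. This gives $\sum_i M_i = M$ output parameters up to the feature dimension. Then $g$ takes the argmax of each block and returns the tuple $(\arg\max f_1,\dots,\arg\max f_c)$, which is a parameter-free decoder. Correctness is immediate for every tuple in $\prod_i M_i$. The key observation for tuples absent from training is that each $f_i$ is correct on every input regardless of which combination it appears in. Zero error on the unseen compositions therefore follows from componentwise correctness, with no reliance on having seen the joint label.

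Finally, I would compare the two bounds. When $K$ is close to $\prod_i M_i$, this gives $\Omega(\prod_i M_i)$ against $O(\sum_i M_i)$, which is the exponential gap claimed in the main text.
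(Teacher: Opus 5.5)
Your route is the paper's. The flat head gets $\Omega(K)$ from one output per category, and the factored model gets $O(M)$ from $c$ heads of sizes $M_i$ combined blockwise; your parameter-free blockwise argmax is exactly the decoding rule of the paper's product of softmaxes. Your $\pi_i$ construction improves on the paper's last sentence, which only shows that $g(f_1(x),\ldots,f_c(x))$ is \emph{defined} on unseen combinations, not that it is correct. Both arguments are still purely existential: they say nothing about whether training on a subset of tuples finds such heads.

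Two steps fail as written. First, the contradiction argument in the flat part is false, and your own moment-curve remark refutes it. Take the scalar feature $\phi(x)$ equal to the label index and fixed rows $w_k = k$, $b_k = -k^2/2$. The scores are $-(k-\phi(x))^2/2 + \phi(x)^2/2$, with unique argmax $k=\phi(x)$, so all $K$ categories win with zero free head parameters. Delete that argument and make the convention the proof: the quantity being lower-bounded is the number of stored rows (entries) of a $K$-output head. This convention is exactly the unstated step in the paper's ``$K$ distinct output configurations, hence $\Omega(K)$ parameters,'' so stating it explicitly is what that line needs to be valid.

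Second, requiring $f_i(x) = e_{\pi_i(x)}$ exactly breaks the $O(M)$ count if the head is linear. An affine map $\mathbb{R}^d \to \mathbb{R}^{M_i}$ can output $e_1,\ldots,e_{M_i}$ only if the range of its linear part contains the $M_i - 1$ independent vectors $e_k - e_1$. That forces $d \geq M_i - 1$, so ``$O(M)$ up to the feature dimension'' hides a factor of order $\max_i M_i$. Your decoder only needs $\arg\max f_i(x) = \pi_i(x)$, and the moment-curve trick gives this with constant feature dimension. Take $\phi(x) = (\pi_1(x),\ldots,\pi_c(x)) \in \mathbb{R}^c$ and let output $k$ of head $i$ compute $k\,\pi_i(x) - k^2/2$, which is uniquely maximized at $k = \pi_i(x)$. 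Head $i$ then stores $M_i(c+1)$ entries, for a total of $M(c+1) = O(M)$. Any $K$-output head over the same feature stores $K(c+1) \geq K$ entries, so both halves hold under one consistent counting convention.
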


\begin{proof}
For flat representation: distinguishing $K$ categories requires at least $K$ distinct output configurations, hence $\Omega(K)$ parameters.

For compositional representation: each component classifier $f_i$ requires $O(M_i)$ parameters for its output layer. The composition function $g$ can be a product of softmaxes (for independent components) or a learned combination. Total: $O(\sum_i M_i) = O(M)$.

The compositional classifier generalizes to unseen compositions because $g(f_1(x), \ldots, f_c(x))$ is defined for any valid combination of component activations, regardless of whether that combination appeared in training.
\end{proof}

\begin{corollary}[Exponential Gap]
For $c$-compositional domains with uniform inventory $M_i = m$, the capacity gap is:
\begin{equation}
\frac{\text{Flat capacity}}{\text{Compositional capacity}} = \frac{O(m^c)}{O(cm)} = O\left(\frac{m^{c-1}}{c}\right)
\end{equation}
For ASL with $c=4$ components averaging $m \approx 16$ primitives: $16^3/4 = 1024\times$ efficiency gain.
\end{corollary}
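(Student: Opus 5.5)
The Exponential Gap corollary is a direct specialization of Theorem~\ref{thm:capacity_gap}, so the plan is to substitute the uniform inventory into its two bounds and divide. Set $M_i = m$ for all $i$. The largest vocabulary the domain admits is $K = \prod_i M_i = m^c$, and I will take this worst case $K = m^c$ for the flat side. The flat part of Theorem~\ref{thm:capacity_gap} then requires $\Omega(K) = \Omega(m^c)$ output parameters. I will read ``flat capacity'' as this output-head parameter count; since it is exactly $K$ outputs, it is $\Theta(m^c)$. The compositional part gives $O(\sum_i M_i) = O(cm)$ output-head parameters, and this is also $\Theta(cm)$, because each component head must emit $m$ scores.

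Dividing the two quantities gives
\[
\frac{\Theta(m^c)}{\Theta(cm)} = \Theta\!\left(\frac{m^{c-1}}{c}\right).
\]
This yields the stated $O(m^{c-1}/c)$, and in fact the matching lower bound as well. I will be careful that the ratio is taken of $\Theta$-quantities rather than of a bare $O$ over an $O$; otherwise the quotient is not meaningful. This is why I establish tightness of both counts first. The natural tight choices are exactly $K$ output units for the flat head and exactly $\sum_i M_i$ for the factored heads, with $g$ parameter-free, for instance a product of softmaxes.

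For the numerical instance, I will plug in $c=4$ and $m=16$. This gives a ratio of $16^4/(4\cdot 16) = 16^3/4 = 1024$. I will also note in passing that $4\cdot 16 = 64 \approx 63$, which matches the phonological inventory $(30,15,10,8)$ used in the \hpc{} prototype banks.

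The main obstacle is interpretive rather than technical. The flat side needs a genuine lower bound, and the ``$\Omega(K)$'' claim in Theorem~\ref{thm:capacity_gap} is only justified for output heads with one output per category. The ratio must therefore be stated for output-head parameters only, not for total network parameters, where the shared backbone could dominate both sides. I will state this restriction explicitly, and then the corollary follows by arithmetic.
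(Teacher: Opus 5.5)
Your proposal is correct and is essentially the paper's own argument; the paper gives no separate proof, so the corollary is just Theorem~\ref{thm:capacity_gap} specialized to $M_i = m$ with the full vocabulary $K = m^c$, followed by dividing $m^c$ by $cm$ and checking $16^4/(4\cdot 16) = 16^3/4 = 1024$. Your insistence on tight counts on both sides (counting output units of the heads, with $g$ parameter-free) is a worthwhile tightening: the theorem by itself only bounds the flat side from below and the compositional side from above, which yields $\Omega(m^{c-1}/c)$ but not the literal $O(m^c)/O(cm)$ quotient the paper writes.
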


\begin{remark}[Connection to Information Bottleneck]
The compositional representation can be viewed through the information bottleneck lens \citep{tishby2000information}: the factored representation $Z = (Z_1, \ldots, Z_c)$ compresses input $X$ while preserving information about label $Y$. The orthogonality constraint encourages $I(Z_i; Z_j) \to 0$, maximizing total information $I(Z; Y) \approx \sum_i I(Z_i; Y_i)$ where $Y_i$ is the $i$-th component of the compositional label.
\end{remark}

\subsection{Orthogonality and Factorization Guarantees}

\begin{definition}[Phonological Component Space]
\label{def:component_space}
Let $\mathcal{C} = \{\mathcal{C}_{\text{hand}}, \mathcal{C}_{\text{loc}}, \mathcal{C}_{\text{mov}}, \mathcal{C}_{\text{ori}}\}$ denote the four phonological component subspaces, where each $\mathcal{C}_i \subseteq \mathbb{R}^{D_c}$. The joint phonological space is defined as:
\begin{equation}
\mathcal{P} = \mathcal{C}_{\text{hand}} \times \mathcal{C}_{\text{loc}} \times \mathcal{C}_{\text{mov}} \times \mathcal{C}_{\text{ori}} \subseteq \mathbb{R}^{4D_c}
\end{equation}
\end{definition}

\begin{lemma}[Cosine Similarity Bounds]
\label{lemma:cosine_bounds}
For any vectors $\mathbf{u}, \mathbf{v} \in \mathbb{R}^d \setminus \{\mathbf{0}\}$, the squared cosine similarity satisfies:
\begin{equation}
0 \leq \cos^2(\mathbf{u}, \mathbf{v}) = \frac{\langle \mathbf{u}, \mathbf{v} \rangle^2}{\|\mathbf{u}\|^2 \|\mathbf{v}\|^2} \leq 1
\end{equation}
with equality on the left iff $\mathbf{u} \perp \mathbf{v}$, and equality on the right iff $\mathbf{u} \parallel \mathbf{v}$.
\end{lemma}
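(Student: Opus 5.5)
The plan is to reduce everything to the Cauchy--Schwarz inequality together with its equality case. Since $\mathbf{u},\mathbf{v}\neq\mathbf{0}$, the denominator $\|\mathbf{u}\|^2\|\mathbf{v}\|^2$ is strictly positive, so $\cos^2(\mathbf{u},\mathbf{v})$ is well defined. Its stated expression follows by squaring $\cos(\mathbf{u},\mathbf{v}) = \langle\mathbf{u},\mathbf{v}\rangle/(\|\mathbf{u}\|\|\mathbf{v}\|)$. The lower bound is immediate: the quotient of a square by a positive number is nonnegative. It vanishes exactly when $\langle\mathbf{u},\mathbf{v}\rangle = 0$, which is the definition of $\mathbf{u}\perp\mathbf{v}$, and this settles the left equality case.

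For the upper bound I will prove Cauchy--Schwarz in the form $\langle\mathbf{u},\mathbf{v}\rangle^2 \le \|\mathbf{u}\|^2\|\mathbf{v}\|^2$ via the standard quadratic argument. Consider
\begin{equation}
q(t) = \|\mathbf{u} - t\mathbf{v}\|^2 = \|\mathbf{u}\|^2 - 2t\langle\mathbf{u},\mathbf{v}\rangle + t^2\|\mathbf{v}\|^2 .
\end{equation}
This is nonnegative for all real $t$, and since $\|\mathbf{v}\|^2>0$ it is a genuine quadratic in $t$. Its discriminant must therefore be nonpositive, which gives $4\langle\mathbf{u},\mathbf{v}\rangle^2 - 4\|\mathbf{u}\|^2\|\mathbf{v}\|^2 \le 0$. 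Dividing by $\|\mathbf{u}\|^2\|\mathbf{v}\|^2$ yields $\cos^2 \le 1$.

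The step needing the most care is the equality case on the right. Interpreting $\mathbf{u}\parallel\mathbf{v}$ as $\mathbf{u} = \lambda\mathbf{v}$ for some nonzero $\lambda\in\mathbb{R}$, I argue as follows.
\begin{itemize}
\item \emph{Parallel implies equality.} Substituting $\mathbf{u}=\lambda\mathbf{v}$ gives $\langle\mathbf{u},\mathbf{v}\rangle^2 = \lambda^2\|\mathbf{v}\|^4 = \|\mathbf{u}\|^2\|\mathbf{v}\|^2$.
\item \emph{Equality implies parallel.} If the discriminant is exactly zero, then $q$ has a real root $t_0 = \langle\mathbf{u},\mathbf{v}\rangle/\|\mathbf{v}\|^2$. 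At that root $\|\mathbf{u}-t_0\mathbf{v}\|=0$, so $\mathbf{u} = t_0\mathbf{v}$. Moreover $t_0\neq 0$, because $\mathbf{u}\neq\mathbf{0}$.
\end{itemize}
This covers both parallel and antiparallel vectors, which is the intended meaning here since the squared cosine ignores sign.
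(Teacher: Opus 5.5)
Your proof is correct and follows the same route as the paper: both reduce the upper bound and its equality case to Cauchy--Schwarz (equality iff $\mathbf{u},\mathbf{v}$ are linearly dependent), and both get the lower bound from $\langle \mathbf{u},\mathbf{v}\rangle = 0 \iff \mathbf{u}\perp\mathbf{v}$. The only difference is that you prove Cauchy--Schwarz and its equality case inline, via the discriminant of $\|\mathbf{u}-t\mathbf{v}\|^2$, whereas the paper simply cites it.
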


\begin{proof}
By Cauchy-Schwarz inequality, $|\langle \mathbf{u}, \mathbf{v} \rangle| \leq \|\mathbf{u}\| \|\mathbf{v}\|$, with equality iff the vectors are linearly dependent. Squaring and dividing by $\|\mathbf{u}\|^2 \|\mathbf{v}\|^2$ yields the result. The lower bound follows from $\langle \mathbf{u}, \mathbf{v} \rangle = 0$ iff $\mathbf{u} \perp \mathbf{v}$.
\end{proof}

\begin{theorem}[Orthogonality Loss Optimality]
\label{thm:ortho}
Let $\mathbf{c}^{(i)} \in \mathbb{R}^{D_c} \setminus \{\mathbf{0}\}$ for $i \in \{1,2,3,4\}$ denote the four phonological component embeddings. Define the orthogonality loss:
\begin{equation}
\mathcal{L}_{\text{ortho}}(\mathbf{c}^{(1)}, \ldots, \mathbf{c}^{(4)}) = \sum_{i < j} \cos^2(\mathbf{c}^{(i)}, \mathbf{c}^{(j)})
\end{equation}
Then:
\begin{enumerate}[nosep]
\item $\mathcal{L}_{\text{ortho}} \geq 0$ with equality iff $\{\mathbf{c}^{(i)}\}_{i=1}^4$ are pairwise orthogonal.
\item For $D_c \geq 4$, the global minimum $\mathcal{L}_{\text{ortho}} = 0$ is achievable.
\item The gradient with respect to $\mathbf{c}^{(k)}$ is:
\begin{equation}
\nabla_{\mathbf{c}^{(k)}} \mathcal{L}_{\text{ortho}} = \sum_{j \neq k} \frac{2 \cos(\mathbf{c}^{(k)}, \mathbf{c}^{(j)})}{\|\mathbf{c}^{(k)}\|^2 \|\mathbf{c}^{(j)}\|} \left( \mathbf{c}^{(j)} - \cos(\mathbf{c}^{(k)}, \mathbf{c}^{(j)}) \frac{\|\mathbf{c}^{(j)}\|}{\|\mathbf{c}^{(k)}\|} \mathbf{c}^{(k)} \right)
\end{equation}
\end{enumerate}
\end{theorem}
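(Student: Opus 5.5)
The plan is to treat the three parts separately. Parts 1 and 2 follow almost directly from Lemma~\ref{lemma:cosine_bounds}. Part 3 is a direct calculus computation, and that is where I expect the only real difficulty: the normalization in the stated formula needs to be checked carefully.

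\emph{Part 1.} By Lemma~\ref{lemma:cosine_bounds}, each summand $\cos^2(\mathbf{c}^{(i)},\mathbf{c}^{(j)})$ lies in $[0,1]$. Since all the $\mathbf{c}^{(i)}$ are nonzero, every summand is well defined, so $\mathcal{L}_{\text{ortho}}\ge 0$. A sum of nonnegative terms vanishes iff every term vanishes. By the equality case of the lemma, each term vanishes iff $\mathbf{c}^{(i)}\perp\mathbf{c}^{(j)}$, and this gives pairwise orthogonality.

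\emph{Part 2.} For $D_c\ge 4$, I would exhibit an explicit minimizer: take $\mathbf{c}^{(i)}=\mathbf{e}_i$, the first four standard basis vectors of $\mathbb{R}^{D_c}$. These are nonzero and pairwise orthogonal, so by Part 1 they attain $\mathcal{L}_{\text{ortho}}=0$, which is the global minimum.

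\emph{Part 3.} Fix $k$ and write $\mathbf{u}=\mathbf{c}^{(k)}$. Only the terms with $j\neq k$ depend on $\mathbf{u}$, and $\cos$ is symmetric in its arguments, so the gradient is $\sum_{j\ne k}\nabla_{\mathbf{u}}\cos^2(\mathbf{u},\mathbf{v}_j)$ with $\mathbf{v}_j=\mathbf{c}^{(j)}$. Each term is computed in three steps:
\begin{itemize}
\item use $\nabla_{\mathbf{u}}\langle\mathbf{u},\mathbf{v}\rangle=\mathbf{v}$ and $\nabla_{\mathbf{u}}\|\mathbf{u}\|^{-1}=-\mathbf{u}/\|\mathbf{u}\|^3$;
\item apply the quotient rule to get
\[
\nabla_{\mathbf{u}}\cos(\mathbf{u},\mathbf{v})=\frac{1}{\|\mathbf{u}\|\|\mathbf{v}\|}\Bigl(\mathbf{v}-\cos(\mathbf{u},\mathbf{v})\frac{\|\mathbf{v}\|}{\|\mathbf{u}\|}\mathbf{u}\Bigr);
\]
\item apply the chain rule $\nabla\cos^2=2\cos\,\nabla\cos$.
\end{itemize}
As a sanity check, the bracketed vector is exactly $\|\mathbf{v}\|$ times the component of $\mathbf{v}/\|\mathbf{v}\|$ orthogonal to $\mathbf{u}$. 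This matches the fact that a scale-invariant function must have gradient orthogonal to $\mathbf{u}$.

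The obstacle is reconciling this computation with the stated prefactor $2\cos/(\|\mathbf{c}^{(k)}\|^2\|\mathbf{c}^{(j)}\|)$. My computation gives $2\cos/(\|\mathbf{c}^{(k)}\|\|\mathbf{c}^{(j)}\|)$ instead. A dimensional check supports my version. $\mathcal{L}_{\text{ortho}}$ is invariant under rescaling $\mathbf{u}$, so its gradient must scale like $\|\mathbf{u}\|^{-1}$. The bracket scales like $\|\mathbf{v}\|$, so the prefactor must scale like $1/(\|\mathbf{u}\|\|\mathbf{v}\|)$.

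I would therefore first double-check the derivative directly on a two-dimensional example, for instance $\mathbf{u}=(a,0)$ and $\mathbf{v}=(1,1)$. If the discrepancy persists, I would state Part 3 with the corrected factor $\|\mathbf{c}^{(k)}\|$ in place of $\|\mathbf{c}^{(k)}\|^2$. I would also note two cases in which the formula as printed remains valid:
\begin{itemize}
\item it agrees exactly when $\|\mathbf{c}^{(k)}\|=1$, for example on normalized embeddings;
\item in general it differs from the true gradient only by the positive factor $1/\|\mathbf{c}^{(k)}\|$, so it always gives the correct descent direction.
\end{itemize}
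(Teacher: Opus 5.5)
Your proposal is correct and follows the paper's route. Parts 1 and 2 are argued exactly as in the paper (Lemma~\ref{lemma:cosine_bounds} applied termwise, then the first four standard basis vectors), and Part 3 is the same quotient-rule computation. The discrepancy you found is real and not a slip on your side. The paper's own intermediate expression simplifies to $\frac{2\cos(\mathbf{c}^{(k)},\mathbf{c}^{(j)})}{\|\mathbf{c}^{(k)}\|\,\|\mathbf{c}^{(j)}\|}\bigl(\mathbf{c}^{(j)}-\cos(\mathbf{c}^{(k)},\mathbf{c}^{(j)})\tfrac{\|\mathbf{c}^{(j)}\|}{\|\mathbf{c}^{(k)}\|}\mathbf{c}^{(k)}\bigr)$, so its closing claim that simplification ``yields the stated gradient'' is wrong. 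Your proposed example confirms this: with $a>0$, the true gradient is $(0,1/a)$, while the printed formula gives $(0,1/a^2)$. Your corrected prefactor is the right way to state Part 3, and so are your remarks that the printed version is exact only when $\|\mathbf{c}^{(k)}\|=1$ and is otherwise off by the common positive factor $1/\|\mathbf{c}^{(k)}\|$, which preserves the descent direction.
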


\begin{proof}
(1) By Lemma~\ref{lemma:cosine_bounds}, each term $\cos^2(\mathbf{c}^{(i)}, \mathbf{c}^{(j)}) \geq 0$. The sum equals zero iff every term equals zero, i.e., iff all pairs are orthogonal.

(2) In $\mathbb{R}^{D_c}$ with $D_c \geq 4$, we can always find four mutually orthogonal vectors (e.g., the first four standard basis vectors). Thus the minimum is achievable.

(3) Let $f_{ij} = \cos^2(\mathbf{c}^{(i)}, \mathbf{c}^{(j)}) = \frac{\langle \mathbf{c}^{(i)}, \mathbf{c}^{(j)} \rangle^2}{\|\mathbf{c}^{(i)}\|^2 \|\mathbf{c}^{(j)}\|^2}$. Applying the quotient rule:
\begin{align}
\frac{\partial f_{kj}}{\partial \mathbf{c}^{(k)}} &= \frac{2\langle \mathbf{c}^{(k)}, \mathbf{c}^{(j)} \rangle \mathbf{c}^{(j)} \cdot \|\mathbf{c}^{(k)}\|^2 \|\mathbf{c}^{(j)}\|^2 - \langle \mathbf{c}^{(k)}, \mathbf{c}^{(j)} \rangle^2 \cdot 2\mathbf{c}^{(k)} \|\mathbf{c}^{(j)}\|^2}{(\|\mathbf{c}^{(k)}\|^2 \|\mathbf{c}^{(j)}\|^2)^2}
\end{align}
Simplifying and summing over $j \neq k$ yields the stated gradient.
\end{proof}

\begin{proposition}[Factorization Capacity]
\label{prop:capacity}
Let each component subspace have $N_i$ learnable prototypes. The phonological factorization can represent at most $\prod_{i=1}^{4} N_i$ distinct sign configurations. With $(N_{\text{hand}}, N_{\text{loc}}, N_{\text{mov}}, N_{\text{ori}}) = (30, 15, 10, 8)$:
\begin{equation}
|\mathcal{P}| = 30 \times 15 \times 10 \times 8 = 36,000 \text{ configurations}
\end{equation}
This exceeds typical ASL vocabulary sizes ($\sim$5,000--10,000 signs), ensuring sufficient representational capacity.
\end{proposition}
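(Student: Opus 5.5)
The plan is to make precise what a "sign configuration" is in the \hpc{} stage, and then reduce the bound to counting tuples of prototype indices. For each component $i \in \{\text{hand},\text{loc},\text{mov},\text{ori}\}$, the classifier compares the pooled component embedding $\bar{\mathbf{c}}^{(i)}$ against the prototype bank $\mathbf{P}^{(i)} \in \mathbb{R}^{N_i \times D_c}$ via temperature-scaled cosine similarity. I will define the \emph{discrete configuration} of an input as the tuple of nearest prototypes,
\begin{equation}
\pi(x) = \bigl(\arg\max_{n \le N_1} \cos(\bar{\mathbf{c}}^{(1)}, \mathbf{P}^{(1)}_n), \ldots, \arg\max_{n \le N_4} \cos(\bar{\mathbf{c}}^{(4)}, \mathbf{P}^{(4)}_n)\bigr),
\end{equation}
with ties broken by a fixed rule. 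This is a map from inputs into $[N_1]\times[N_2]\times[N_3]\times[N_4]$.

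The bound then follows from the product rule: the codomain has exactly $\prod_i N_i$ elements, so the image of $\pi$ has at most that many distinct values. Hence at most $\prod_i N_i$ sign configurations are representable at the phonological level. To connect this with Definition~\ref{def:component_space}, I will identify the discrete counterpart of $\mathcal{P}$ with the Cartesian product of the four prototype sets. Its cardinality is multiplicative, mirroring the compositional representation in Theorem~\ref{thm:capacity_gap}.

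Next comes the arithmetic: $30\cdot 15 = 450$, $450\cdot 10 = 4500$, and $4500 \cdot 8 = 36{,}000$. Comparing this with $10{,}000$ gives the sufficiency claim. I would state that claim as a necessary condition: at least as many configurations exist as signs, so an injective assignment of signs to configurations is not ruled out by counting. I would not claim that the learned model actually realizes such an injection.

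The main obstacle is interpretive rather than computational. The \hpc{} aggregates the component similarities \emph{together with pooled temporal features} into continuous sign embeddings. Without a discretization, the set of representable embeddings is uncountable, and the bound $\prod N_i$ does not hold literally. I would handle this by stating explicitly that "configuration" means the argmax prototype tuple defined above, i.e.\ the purely phonological channel, and noting that the continuous pathway can only refine, not enlarge, the count of discrete phonological labels. A second caveat I would record is that phonotactic constraints make many tuples linguistically invalid. The achieved count is therefore an upper bound, and the comparison to vocabulary size is only a capacity sanity check.
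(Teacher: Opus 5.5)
Your proof is correct and is essentially the paper's argument. The paper counts the $\prod_i N_i$ vertices of the product of simplices $\prod_i \Delta^{N_i-1}$ in which the softmax similarity vectors $\mathbf{s}^{(i)}$ live. Your argmax tuple $\pi(x)$ is exactly the nearest-vertex map onto those vertices, so both arguments reduce to counting $[N_1]\times\cdots\times[N_4]$, which has $36{,}000$ elements.

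Your explicit discretization is the cleaner version. The paper's appeal to continuous interpolation between vertices, together with the fact that softmax outputs never reach a vertex, leaves the ``at most'' claim only loosely justified. Treating the vocabulary comparison as a necessary counting condition rather than a guarantee of sufficiency is also the right reading.
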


\begin{proof}
Each sign embedding $\mathbf{e}_{\text{sign}}$ is constructed from component similarity vectors $\mathbf{s}^{(i)} \in \Delta^{N_i-1}$ (the $(N_i-1)$-simplex). The Cartesian product of these simplices has $\prod_i N_i$ vertices, corresponding to ``pure'' phonological configurations where each component matches exactly one prototype. Continuous interpolation between vertices enables representation of phonological gradience.
\end{proof}

\begin{theorem}[Factorization Preserves Phonological Distance]
\label{thm:distance}
Let $d_{\text{phon}}(s_1, s_2)$ denote the phonological distance between signs $s_1, s_2$ (number of differing components). Let $d_{\text{embed}}(\mathbf{e}_1, \mathbf{e}_2)$ denote embedding distance. Under perfect factorization (orthogonal components):
\begin{equation}
d_{\text{embed}}(\mathbf{e}_1, \mathbf{e}_2)^2 = \sum_{i=1}^{4} \|\mathbf{c}_1^{(i)} - \mathbf{c}_2^{(i)}\|^2
\end{equation}
Thus signs differing in $k$ components have embedding distance proportional to $\sqrt{k}$ (assuming unit component differences).
\end{theorem}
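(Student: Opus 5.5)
The plan is to make the statement precise by fixing how the sign embedding is built from the components, then reduce it to a Pythagorean decomposition. Take $\mathbf{e} = (\mathbf{c}^{(1)}, \ldots, \mathbf{c}^{(4)}) \in \mathcal{P} \subseteq \mathbb{R}^{4D_c}$ as in Definition~\ref{def:component_space}. Two equivalent readings of ``perfect factorization'' should be covered.
\begin{itemize}[nosep]
\item In the first, $\mathbf{e}$ is the concatenation, i.e.\ each component lives in its own coordinate block.
\item In the second, $\mathbf{e} = \sum_i W_i \mathbf{c}^{(i)}$ is a sum of components embedded into mutually orthogonal subspaces $\mathcal{C}_i$ of a common space. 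Here each $W_i$ is an isometry onto $\mathcal{C}_i$, with $\mathcal{C}_i \perp \mathcal{C}_j$ for $i\neq j$.
\end{itemize}
The concatenation case is the special case where $W_i$ is the inclusion into the $i$-th block. The orthogonality assumption is the idealized zero-loss regime of Theorem~\ref{thm:ortho}, part (1).

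The key step is to expand $d_{\text{embed}}(\mathbf{e}_1,\mathbf{e}_2)^2 = \|\mathbf{e}_1-\mathbf{e}_2\|^2$. Write $\mathbf{e}_1-\mathbf{e}_2 = \sum_i W_i(\mathbf{c}_1^{(i)}-\mathbf{c}_2^{(i)})$. Expanding the squared norm gives diagonal terms $\|W_i(\mathbf{c}_1^{(i)}-\mathbf{c}_2^{(i)})\|^2 = \|\mathbf{c}_1^{(i)}-\mathbf{c}_2^{(i)}\|^2$, using the isometry property. It also gives cross terms $2\langle W_i \mathbf{u}_i, W_j \mathbf{u}_j\rangle$, which vanish because the two vectors lie in orthogonal subspaces. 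What remains is exactly the stated sum; this is the Pythagorean theorem for orthogonal direct sums.

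For the $\sqrt{k}$ corollary, partition the indices into components where $s_1,s_2$ agree and components where they differ, so $|\{i : \text{differ}\}| = d_{\text{phon}}(s_1,s_2)=k$. Add the modeling assumptions implicit in the statement.
\begin{itemize}[nosep]
\item Agreeing components have identical embeddings, $\mathbf{c}_1^{(i)}=\mathbf{c}_2^{(i)}$, for example because both snap to the same prototype in $\mathbf{P}^{(i)}$.
\item Differing components have unit difference, $\|\mathbf{c}_1^{(i)}-\mathbf{c}_2^{(i)}\|=1$.
\end{itemize}
The sum then contains $k$ ones, so $d_{\text{embed}}=\sqrt{k}$. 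The same argument gives a general version: if each differing component has $\|\mathbf{c}_1^{(i)}-\mathbf{c}_2^{(i)}\| = \delta$, then $d_{\text{embed}} = \delta\sqrt{k}$, which is proportional to $\sqrt{k}$.

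The algebra is routine, so the main obstacle is stating the hypothesis correctly. Theorem~\ref{thm:ortho} only makes the pooled component vectors pairwise orthogonal. That is weaker than orthogonality of the subspaces: it does not by itself kill the cross terms $\langle \mathbf{c}_1^{(i)}-\mathbf{c}_2^{(i)}, \mathbf{c}_1^{(j)}-\mathbf{c}_2^{(j)}\rangle$ when components share one ambient $\mathbb{R}^{D_c}$. I would therefore state ``perfect factorization'' as orthogonality of the subspaces $\mathcal{C}_i$, or equivalently use the concatenation embedding, and note that in the concatenation case the identity holds with no assumption at all. It would also be worth remarking that exact equality of agreeing components is an idealization. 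Otherwise the result degrades to $d_{\text{embed}}^2 = k + \sum_{\text{agree}} \|\cdot\|^2$, which is still monotone in $k$ up to the within-component variation.
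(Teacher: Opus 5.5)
Your proposal is correct and takes essentially the same route as the paper. The paper likewise writes $\mathbf{e} = [\mathbf{c}^{(1)}; \mathbf{c}^{(2)}; \mathbf{c}^{(3)}; \mathbf{c}^{(4)}]$, splits $\|\mathbf{e}_1 - \mathbf{e}_2\|^2$ blockwise into $\sum_{i=1}^{4} \|\mathbf{c}_1^{(i)} - \mathbf{c}_2^{(i)}\|^2$, and reads off $\sqrt{k}$ from unit per-component differences. Your two refinements make explicit what the paper's short proof leaves implicit: the concatenation identity holds with no orthogonality hypothesis at all, and agreeing components must be assumed to coincide exactly for the $\sqrt{k}$ conclusion.
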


\begin{proof}
With orthogonal component subspaces, the joint embedding decomposes as $\mathbf{e} = [\mathbf{c}^{(1)}; \mathbf{c}^{(2)}; \mathbf{c}^{(3)}; \mathbf{c}^{(4)}]$. By orthogonality:
\begin{equation}
\|\mathbf{e}_1 - \mathbf{e}_2\|^2 = \sum_{i=1}^{4} \|\mathbf{c}_1^{(i)} - \mathbf{c}_2^{(i)}\|^2
\end{equation}
If signs differ in exactly $k$ components with unit difference per component, $d_{\text{embed}} = \sqrt{k}$.
\end{proof}

\subsection{Computational Complexity Analysis}

\begin{lemma}[Graph Attention Complexity]
\label{lemma:gat_complexity}
For a graph with $N$ nodes, $E$ edges, and feature dimension $D$, single-head graph attention requires $\mathcal{O}(ND + ED')$ operations where $D' = D/K$ for $K$ heads.
\end{lemma}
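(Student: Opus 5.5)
I would prove Lemma~\ref{lemma:gat_complexity} by counting operations in one forward pass of a graph attention layer in the style of \citet{velivckovic2018graph}, split into a node-wise phase and an edge-wise phase. I fix the setting first: the input is a node feature matrix $\mathbf{H} \in \mathbb{R}^{N \times D}$. Each of the $K$ heads has a projection $\mathbf{W}_k \in \mathbb{R}^{D \times D'}$ with $D' = D/K$, and an attention vector $\mathbf{a}_k \in \mathbb{R}^{2D'}$. Attention is restricted to the edge set, meaning the anatomical neighbours used by \agan{}. I treat the head count $K$ as a constant and, for a single head, count only the work that head performs.

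\emph{Node phase.} For one head I would count the following.
\begin{itemize}[nosep]
\item Computing $\mathbf{W}_k \mathbf{h}_v$ for all nodes costs $O(N D D')$.
\item The reading intended by the statement is amortised over heads. Across all $K$ heads the total projection cost is $O(N D \cdot K D') = O(N D^2)$, so it is linear in $N$ for fixed $D$.
\item To obtain the stated $O(ND)$ term, I would account for the projection per node per output coordinate and absorb the factor $D$ (or $D'$) as a constant of the layer width. Equivalently, the projection contributes $O(N D)$ multiply-adds per output channel.
\item The source scores $\mathbf{a}_{k,1}^\top \mathbf{W}_k\mathbf{h}_v$ and target scores $\mathbf{a}_{k,2}^\top \mathbf{W}_k\mathbf{h}_v$ are precomputed once per node. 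This costs $O(N D')$.
\end{itemize}
I would state these conventions explicitly, because the $O(ND)$ term only holds if the dense projection is treated as width-linear. Making this hidden assumption precise is the main obstacle.

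\emph{Edge phase.} This is where the $O(ED')$ term comes from.
\begin{itemize}[nosep]
\item Each edge score $e_{uv} = \mathrm{LeakyReLU}(s_u + t_v)$ costs $O(1)$ using the precomputed node scores.
\item The softmax over each neighbourhood costs $O(\deg(v))$ per node, since it needs an exponentiation, a sum and a division. Summing over nodes gives $O(E)$.
\item The aggregation $\sum_{u \in \mathcal{N}(v)} \alpha_{uv}\mathbf{W}_k\mathbf{h}_u$ costs $O(D')$ per edge, hence $O(ED')$ overall.
\item The elementwise nonlinearity costs $O(ND')$.
\end{itemize}
Adding the two phases gives $O(ND + E + ED' + ND') = O(ND + ED')$, since $D' \ge 1$ and $D' \le D$.

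Finally, I would remark on the graph used in the paper. The anatomical hand graph has $E = O(N)$: it is a sparse skeleton plus self-loops. The bound therefore becomes linear in $N$, which contrasts with $O(N^2 D')$ for dense attention.
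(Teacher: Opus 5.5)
Your node-phase/edge-phase decomposition is the same as the paper's. The paper's proof just lists ``queries/keys/values: $\mathcal{O}(ND)$; attention scores: $\mathcal{O}(ED')$; aggregation: $\mathcal{O}(ED')$'' and adds them. Your edge phase is correct and slightly sharper: splitting $\mathbf{a}_k$ into source and target halves makes each score $\mathcal{O}(1)$ rather than $\mathcal{O}(D')$. Your final absorption of $E$ and $ND'$ into $\mathcal{O}(ND + ED')$ is also fine.

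\textbf{The gap is the $\mathcal{O}(ND)$ node term, which you found but did not close.}

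\begin{itemize}
\item A dense projection $\mathbf{W}_k \in \mathbb{R}^{D \times D'}$ applied to $N$ nodes costs $\Theta(NDD')$ multiply-adds per head.
\item On a sparse graph ($E = \mathcal{O}(N)$, the case you care about), the right-hand side $ND + ED'$ is only $\mathcal{O}(ND)$.
\item So on such graphs the stated bound fails as soon as $D'$ is allowed to grow.
\end{itemize}

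The paper does not resolve this either; it simply asserts $\mathcal{O}(ND)$ for the projection. Your proposed patches do not resolve it:

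\begin{itemize}
\item Counting ``per output channel'' bounds a different quantity from the total operation count the lemma is about.
\item Absorbing $D$ into the constant is incoherent while $D$ still appears in the bound. If $D = \mathcal{O}(1)$, the statement degenerates to $\mathcal{O}(N+E)$.
\item Absorbing $D'$ is coherent, but it is a genuine extra hypothesis, $K = \Theta(D)$, not a counting convention. It also contradicts your standing assumption that $K$ is constant, since with $K$ fixed, $D' = D/K = \Theta(D)$.
\end{itemize}

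You need to commit to one of two honest options:

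\begin{itemize}
\item[(a)] State the extra hypothesis $D' = \mathcal{O}(1)$ explicitly. Under it, your count proves the lemma verbatim.
\item[(b)] Keep the standard cost model and prove the corrected bound $\mathcal{O}(NDD' + ED')$ per head, i.e.\ $\mathcal{O}(ND^2 + ED)$ over all heads.
\end{itemize}

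The paper's own configuration ($D = 128$, $K = 4$, $D' = 32$) is not of type (a), so (b) is the right statement. It still gives the linearity in $N$ and $E$ that your closing remark about the sparse hand graph relies on. It does not, however, deliver the $\mathcal{O}(N^2 D)$ per-frame figure used later in Theorem~\ref{thm:complexity} unless $D = \mathcal{O}(N)$.
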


\begin{proof}
Computing queries/keys/values: $\mathcal{O}(ND)$. Computing attention scores for all edges: $\mathcal{O}(ED')$. Aggregation: $\mathcal{O}(ED')$. Total: $\mathcal{O}(ND + ED')$.
\end{proof}

\begin{theorem}[PhonSSM Complexity]
\label{thm:complexity}
For input sequence length $T$, number of landmarks $N$, model dimension $D$, component dimension $D_c$, SSM state dimension $D_s$, and vocabulary size $K$, the computational complexity of \phonssm{} is:
\begin{equation}
\mathcal{O}\big(T(N^2 D + D \cdot D_c + D \cdot D_s) + K \cdot D\big) = \mathcal{O}(T \cdot D \cdot \max(N^2, D_c, D_s) + KD)
\end{equation}
Critically, this is \textbf{linear in $T$}, compared to $\mathcal{O}(T^2 D)$ for Transformer self-attention.
\end{theorem}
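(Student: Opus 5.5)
The plan is to bound the cost of each of the four stages separately and then sum the bounds; only the largest term in each group of the sum matters asymptotically. Throughout, we treat the number of heads, the number of BiSSM layers (4), the expansion factor (2), and the prototype counts $N_i$ as constants.

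\textbf{Stage 1 (\agan{}).} Apply Lemma~\ref{lemma:gat_complexity} per frame. The skeletal graph has $E \le N^2$ edges, and $D' \le D$. A single head therefore costs $\mathcal{O}(ND + N^2 D) = \mathcal{O}(N^2 D)$ per frame. The mean-pooling over nodes adds only $\mathcal{O}(ND)$. Over $T$ frames this stage costs $\mathcal{O}(TN^2D)$.

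\textbf{Stage 2 (\pdm{}).} Each of the four MLPs maps $\mathbb{R}^D\to\mathbb{R}^{D_c}$, which costs $\mathcal{O}(D D_c)$ per frame. The movement temporal convolution has a constant kernel width and costs $\mathcal{O}(TD_c^2)$, which is at most $\mathcal{O}(TDD_c)$ since $D_c\le D$. The orthogonality loss acts on pooled vectors and costs $\mathcal{O}(D_c)$. This stage totals $\mathcal{O}(TDD_c)$.

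\textbf{Stage 3 (\bissm{}).} This is the step that carries the linearity claim, so we argue it explicitly. We use the recurrent (scan) form of a selective SSM. Per time step, each direction does the following:
\begin{itemize}[nosep]
\item it updates an expanded hidden state of size $\mathcal{O}(D\cdot D_s)$ using the discretized $A,B$, at cost $\mathcal{O}(DD_s)$;
\item it computes the input-dependent $B_t,C_t,\Delta_t$ by projections of cost $\mathcal{O}(DD_s + D^2/\text{const})$.
\end{itemize}
The main obstacle is the in/out projections of size $D\times 2D$. They cost $\mathcal{O}(D^2)$ per step, and $D^2$ is not visibly covered by the stated $\max(N^2,D_c,D_s)$. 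We will absorb this term by noting that for a hand skeleton $N^2 \ge D$ ($21^2=441 > 128$), so $D^2 \le N^2 D$. Alternatively, we can state this as an explicit assumption $D=\mathcal{O}(N^2)$, and we would flag it as such. Running the recurrence forward and backward only doubles the cost. The key point is that no step touches a pair of time indices, so the stage is $\mathcal{O}(T\,D\,D_s)$ plus the absorbed projection term, and it is linear in $T$.

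\textbf{Stage 4 (\hpc{}).} Component similarities against $\sum_i N_i = 63$ prototypes cost $\mathcal{O}(D_c)$ each, so $\mathcal{O}(D_c)$ overall. Temporal pooling costs $\mathcal{O}(TD)$. The aggregation MLP costs $\mathcal{O}(D^2)$, which is again absorbed as above. Matching against $K$ sign prototypes of dimension $D$ costs $\mathcal{O}(KD)$.

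Summing the four stages gives $\mathcal{O}(T(N^2D + DD_c + DD_s) + KD)$. Factoring out $TD$ and bounding the sum by three times the maximum gives the second form, $\mathcal{O}(T\cdot D\cdot \max(N^2, D_c, D_s) + KD)$. For the contrast with Transformers, we observe that full self-attention forms a $T\times T$ score matrix with $D$-dimensional dot products, which costs $\Theta(T^2D)$. No stage of \phonssm{} forms such a matrix, so the dependence on $T$ is linear.
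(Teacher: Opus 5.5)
Your proposal is correct and follows the same route as the paper's proof. Both bound each stage separately: \agan{} per frame via Lemma~\ref{lemma:gat_complexity}, the \pdm{} maps at $\mathcal{O}(DD_c)$ per frame, the \bissm{} recurrence at $\mathcal{O}(DD_s)$ per step, and $\mathcal{O}(KD)$ sign-prototype matching in \hpc{}; the bounds are then summed and $TD$ is factored out.

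The one substantive difference is that you also count the dense $\mathcal{O}(D^2)$ projections the paper's proof silently drops: $\mathbf{W}_{\text{out}}$ and the expansion maps at each time step, and $\mathbf{W}_e$ once per sequence (Algorithm~\ref{alg:phonssm_full}). Your flagged assumption $D=\mathcal{O}(N^2)$ is therefore a genuine repair, not a gap.

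By that same standard, though, the per-node map $\mathbf{W}\mathbf{h}_j$ in \agan{} costs $\mathcal{O}(ND^2)$ per frame, which Lemma~\ref{lemma:gat_complexity} undercounts as $\mathcal{O}(ND)$. The assumption $D\le N^2$ does not absorb this term, since that would need $D=\mathcal{O}(N)$, which fails for $D=128$, $N=21$. This defect comes from the paper's lemma, and your argument inherits it rather than introduces it.
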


\begin{proof}
We analyze each component:

\textbf{Stage 1 (AGAN):} The hand graph has $N=21$ nodes and $E = \mathcal{O}(N)$ edges (sparse connectivity). By Lemma~\ref{lemma:gat_complexity}, each frame requires $\mathcal{O}(ND + ED) = \mathcal{O}(N^2 D)$ operations. For $T$ frames and $L$ layers: $\mathcal{O}(TLN^2D) = \mathcal{O}(TN^2D)$ treating $L$ as constant.

\textbf{Stage 2 (PDM):} Four parallel MLPs: $4 \times \mathcal{O}(TD \cdot D_c) = \mathcal{O}(TD \cdot D_c)$. Temporal convolution with kernel $k$: $\mathcal{O}(T D_c k) = \mathcal{O}(TD_c)$. Fusion projection: $\mathcal{O}(T \cdot 4D_c \cdot D) = \mathcal{O}(TD_c D)$. Total: $\mathcal{O}(TD \cdot D_c)$.

\textbf{Stage 3 (BiSSM):} The selective SSM recurrence at each timestep:
\begin{align}
\mathbf{x}_t &= \bar{\mathbf{A}} \mathbf{x}_{t-1} + \bar{\mathbf{B}}_t \mathbf{f}_t & \mathcal{O}(D_s + D \cdot D_s) \\
\mathbf{y}_t &= \mathbf{C}_t \mathbf{x}_t & \mathcal{O}(D_s \cdot D)
\end{align}
Per timestep: $\mathcal{O}(D \cdot D_s)$. For $T$ timesteps, bidirectional (2$\times$), $L_{\text{ssm}}$ layers: $\mathcal{O}(T \cdot D \cdot D_s)$.

\textbf{Stage 4 (HPC):} Temporal pooling: $\mathcal{O}(TD)$. Component prototype matching: $\mathcal{O}(\sum_i N_i D_c) = \mathcal{O}(D_c)$. Sign prototype matching: $\mathcal{O}(KD)$.

\textbf{Total:} $\mathcal{O}(TN^2D + TD D_c + TD D_s + KD)$. With $N=21$, $D_c=32$, $D_s=16$, $D=128$: the $TN^2D$ term dominates for typical $T=30$, but all terms are linear in $T$.
\end{proof}

\begin{corollary}[Memory Complexity]
\label{cor:memory}
The peak memory usage of \phonssm{} is:
\begin{equation}
\mathcal{O}(TD + ND + D_s + KD)
\end{equation}
compared to $\mathcal{O}(T^2 + TD)$ for Transformers (storing the $T \times T$ attention matrix).
\end{corollary}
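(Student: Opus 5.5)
The plan is a stage-by-stage accounting of live memory during a forward pass, following the decomposition used in the proof of Theorem~\ref{thm:complexity}. The key observation is that peak memory is governed by the largest tensor that must be resident at once, not by the sum of all operation counts. Activations from earlier stages can be released (at inference) or are each bounded by the same order. In the order I would carry it out:

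\textbf{Stage 1 (AGAN).} For each frame we store node features of size $ND$, plus attention coefficients on edges of size $O(E)=O(N)$. Processing frames one at a time needs only $O(ND)$ working memory. The pooled output $\mathbf{z}_{1:T}$ occupies $O(TD)$. If all frames are batched instead, the intermediate is $O(TND)$; I would flag that the stated bound assumes frame-sequential (or streamed) spatial processing, otherwise the $ND$ term becomes $TND$.

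\textbf{Stage 2 (PDM).} The four component streams need $O(4TD_c)\subseteq O(TD)$, since $D_c\le D$. The temporal convolution has a constant-size kernel and adds nothing beyond this.

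\textbf{Stage 3 (BiSSM).} This is the step I expect to be the main obstacle. The recurrence $\mathbf{x}_t=\bar{\mathbf{A}}\mathbf{x}_{t-1}+\bar{\mathbf{B}}_t\mathbf{f}_t$ needs only the current state plus the input and output sequences. I would model the state as size $O(D_s)$, matching the term in the statement. Note that a Mamba-style channel-wise state is really $O(DD_s)$; I would absorb this by treating it as one per-channel state, or otherwise note that it is dominated by $TD$ for $D_s\le T$. The input/output sequences contribute $O(TD)$. Running the two directions and the $L_{\text{ssm}}$ layers sequentially, with residuals, multiplies this only by constants. The key contrast is that no $T\times T$ object is ever formed: the selective scan materializes per-timestep parameters $\bar{\mathbf{B}}_t,\mathbf{C}_t$ of total size $O(TD_s)\subseteq O(TD)$.

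\textbf{Stage 4 (HPC) and parameters.} Pooling reduces the sequence to $O(D)$. The component prototype banks take $O(\sum_i N_iD_c)=O(D_c)$ with the $N_i$ fixed. The sign-level prototypes and logits take $O(KD)$.

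\textbf{Combining.} Taking the maximum over stages, and bounding the maximum by the sum, gives $O(TD+ND+D_s+KD)$. For the Transformer comparison, self-attention materializes $QK^\top\in\mathbb{R}^{T\times T}$ per head (constant number of heads) in addition to the $O(TD)$ token activations, giving $O(T^2+TD)$. I would state explicitly that both bounds are for inference or standard non-fused implementations, since memory-efficient attention kernels avoid the $T^2$ term.
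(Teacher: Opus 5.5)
Your stage-by-stage accounting is the paper's own argument: per-frame $O(ND)$ node features for AGAN, $O(TD_c)\subseteq O(TD)$ for PDM, an $O(D_s)$ recurrent state (never a $T\times T$ object) for BiSSM, $O(KD)$ sign prototypes and $O(TD)$ sequence activations, then summed; it is correct at the paper's level of rigor, and the paper merely collapses the total further to $O(TD+KD)$. Your caveats are refinements the paper leaves implicit: frame-sequential AGAN (else $O(TND)$), the true $O(DD_s)$ Mamba state (absorbed into $O(TD)$ when $D_s\le T$), and restriction to inference with non-fused attention; so is your one-line justification of the Transformer's $O(T^2+TD)$. Like the paper, though, you omit the $\Theta(D^2)$ projection weights, which the bound covers only because $D\le T+N+K$ in the paper's settings.
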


\begin{proof}
AGAN stores per-frame node features: $\mathcal{O}(ND)$. PDM stores component features: $\mathcal{O}(T D_c)$. BiSSM stores hidden state: $\mathcal{O}(D_s)$ (recurrent, not $\mathcal{O}(TD_s)$). HPC stores prototypes: $\mathcal{O}(KD)$. Activations during forward pass: $\mathcal{O}(TD)$. Total: $\mathcal{O}(TD + KD)$.
\end{proof}

\begin{corollary}[Speedup over Attention]
For sequence length $T$ and model dimension $D$, \phonssm{} achieves asymptotic speedup:
\begin{equation}
\frac{\text{Transformer complexity}}{\text{PhonSSM complexity}} = \frac{\mathcal{O}(T^2 D)}{\mathcal{O}(TD \cdot D_s)} = \mathcal{O}\left(\frac{T}{D_s}\right)
\end{equation}
For $T=30$ and $D_s=16$, this yields $\sim$2$\times$ theoretical speedup; empirically we observe 2.3$\times$ speedup due to memory efficiency gains.
\end{corollary}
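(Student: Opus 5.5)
The corollary is the ratio of two complexities already established, so the plan is to take the numerator and denominator from earlier results and simplify, keeping track of which terms are being compared. For the denominator I will use Theorem~\ref{thm:complexity}. It gives the total cost $\mathcal{O}(TN^2D + TDD_c + TDD_s + KD)$, which is linear in $T$. For the numerator I will use the standard count for self-attention: forming all $T^2$ pairwise scores in dimension $D$ and aggregating the values costs $\mathcal{O}(T^2D)$. That is the quantity already quoted after Theorem~\ref{thm:complexity}.

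The first step is to match like with like. The speedup claim concerns temporal mixing, so I will compare the Transformer's $T^2D$ term with \phonssm{}'s temporal term. From the proof of Theorem~\ref{thm:complexity}, that term is the BiSSM cost $\mathcal{O}(TDD_s)$. The remaining terms are set aside for separate reasons:
\begin{itemize}
\item The $TN^2D$ (AGAN) and $TDD_c$ (PDM) contributions are per-frame spatial costs that a skeleton Transformer would also pay.
\item The $KD$ term is a classifier cost independent of $T$.
\end{itemize}
Since all of these appear identically in both pipelines, they cancel in an asymptotic comparison as $T$ grows.

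With that isolation in place, the ratio is
\[
\frac{T^2D}{TDD_s} = \frac{T}{D_s}.
\]
This gives $\mathcal{O}(T/D_s)$, which is the stated result. Substituting $T=30$ and $D_s=16$ gives $30/16\approx 1.9$, i.e.\ about $2\times$. The empirical $2.3\times$ is a measurement, not something to be proved. I would simply note that constant factors from memory traffic, consistent with Corollary~\ref{cor:memory}'s $\mathcal{O}(T^2)$ versus $\mathcal{O}(TD)$ comparison, can push the observed ratio above the asymptotic one.

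The main obstacle is justifying the isolation step. Taken literally, the full \phonssm{} cost is dominated by $TN^2D$ when $N^2 > D_s$, and with $N=21$ and $D_s=16$ that term is larger. An unrestricted ratio of total costs would therefore not be $T/D_s$. My approach is to state the corollary as a comparison of the temporal modules, or equivalently to assume both models share the same spatial encoder and classifier head so that those costs cancel. I will make that assumption explicit rather than hide it inside the $\mathcal{O}$ notation.
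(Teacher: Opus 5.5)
Your central computation, $T^2D/(TDD_s)=T/D_s$, is the paper's entire argument; the corollary has no proof beyond its displayed ratio. You are also right that it cannot hold for the totals of Theorem~\ref{thm:complexity}. The gap is in the step you use to justify the restriction, because costs shared by both pipelines do not cancel in a ratio. If $A=TN^2D+TDD_c+KD$ is common to both, the end-to-end ratio is $(A+T^2D)/(A+TDD_s)$. As $T$ grows, $T^2D$ does swamp $A$ in the numerator. In the denominator, however, $A$ is linear in $T$, the same order as $TDD_s$, and larger than it because $N^2=441>D_s=16$. The ratio therefore tends to $T/(N^2+D_c+D_s)$, not $T/D_s$. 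At $N=21$, $T=30$ the paper's own counts put it only a few percent above $1$. So ``assume a shared spatial encoder and classifier head'' is not equivalent to ``compare the temporal modules''; it is a different and much weaker statement. Your sentence that the shared terms ``cancel in an asymptotic comparison as $T$ grows'' is false. Drop it and state the result purely as a module-to-module comparison. End-to-end, the most that follows is that the speedup grows like $\Theta(T)$ with constant $1/(N^2+D_c+D_s)$.

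Even at module level, the ``$\sim 2\times$ theoretical speedup'' does not follow from an $\mathcal{O}$ statement, so substituting $T=30$, $D_s=16$ is a heuristic, not a proof step. A speedup claim also needs the attention cost as a lower bound, $\Theta(T^2D)$ rather than $\mathcal{O}(T^2D)$; dense attention does satisfy this. The hidden constants decide the outcome at this size. In the paper's units, the bidirectional scan over the expanded stream (expansion factor $2$) costs about $4TDD_s$, while scores plus value aggregation cost about $2T^2D$. The same substitution then gives $T/(2D_s)\approx 0.94$. Both modules also carry $\Theta(TD^2)$ projection costs that neither count includes: the QKV and feed-forward maps for attention, and the input/output projections for \bissm{}, including $\mathbf{W}_{\text{out}}$ in Algorithm~\ref{alg:phonssm_full}. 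Since $D=128$ exceeds both $T=30$ and $D_s=16$, these terms dominate both $T^2D$ and $TDD_s$ at the stated sizes. What you can establish rigorously is that the ratio of temporal-module costs grows linearly in $T$ for fixed $D$ and $D_s$. Treat $30/16$ the way you already treat the empirical $2.3\times$.
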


\subsection{Prototype Learning Dynamics}

\begin{definition}[Prototype Configuration]
A prototype bank $\mathbf{P} = [\mathbf{p}_1, \ldots, \mathbf{p}_M]^T \in \mathbb{R}^{M \times D}$ defines a configuration on the unit sphere $\mathbb{S}^{D-1}$ when prototypes are $\ell_2$-normalized.
\end{definition}

\begin{theorem}[Diversity Loss Gradient Flow]
\label{thm:diversity}
Let $\mathbf{P} \in \mathbb{R}^{M \times D}$ with $\|\mathbf{p}_i\| = 1$ for all $i$. The diversity loss:
\begin{equation}
\mathcal{L}_{\text{div}}(\mathbf{P}) = \frac{1}{M(M-1)} \sum_{i \neq j} \langle \mathbf{p}_i, \mathbf{p}_j \rangle^2
\end{equation}
has gradient:
\begin{equation}
\nabla_{\mathbf{p}_k} \mathcal{L}_{\text{div}} = \frac{4}{M(M-1)} \sum_{j \neq k} \langle \mathbf{p}_k, \mathbf{p}_j \rangle \left( \mathbf{p}_j - \langle \mathbf{p}_k, \mathbf{p}_j \rangle \mathbf{p}_k \right)
\end{equation}
The term $(\mathbf{p}_j - \langle \mathbf{p}_k, \mathbf{p}_j \rangle \mathbf{p}_k)$ is the component of $\mathbf{p}_j$ orthogonal to $\mathbf{p}_k$, pushing prototypes apart on the sphere.
\end{theorem}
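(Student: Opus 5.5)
The plan is to compute the Euclidean gradient of $\mathcal{L}_{\text{div}}$ first and then account for the constraint $\|\mathbf{p}_k\|=1$. The unconstrained derivative alone does not produce the stated expression, which carries the extra $-\langle \mathbf{p}_k,\mathbf{p}_j\rangle \mathbf{p}_k$ correction. So the proof must identify the stated formula as the gradient restricted to the sphere $\mathbb{S}^{D-1}$. This is the same phenomenon that appears, with the norms written out explicitly, in part (3) of Theorem~\ref{thm:ortho}.

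\textbf{Step 1 (isolate the terms involving $\mathbf{p}_k$).} The sum over ordered pairs $i\neq j$ counts each unordered pair twice. Hence the terms depending on $\mathbf{p}_k$ are exactly $\frac{2}{M(M-1)}\sum_{j\neq k}\langle \mathbf{p}_k,\mathbf{p}_j\rangle^2$. Differentiating $\langle \mathbf{p}_k,\mathbf{p}_j\rangle^2$ in $\mathbf{p}_k$ gives $2\langle \mathbf{p}_k,\mathbf{p}_j\rangle \mathbf{p}_j$. This yields the Euclidean gradient
\[
\frac{4}{M(M-1)}\sum_{j\neq k}\langle \mathbf{p}_k,\mathbf{p}_j\rangle\,\mathbf{p}_j ,
\]
which accounts for the constant $4$.

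\textbf{Step 2 (impose the unit-norm constraint).} I would do this in one of two equivalent ways.

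\emph{Option (a):} take the Riemannian gradient on $\mathbb{S}^{D-1}$ by applying the tangent projector $\mathbf{I}-\mathbf{p}_k\mathbf{p}_k^\top$. Since $(\mathbf{I}-\mathbf{p}_k\mathbf{p}_k^\top)\mathbf{p}_j=\mathbf{p}_j-\langle \mathbf{p}_k,\mathbf{p}_j\rangle\mathbf{p}_k$, this gives the claimed formula directly.

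\emph{Option (b):} this matches how $\ell_2$-normalized prototypes are actually parametrized. Write $\mathbf{p}_k=\mathbf{q}_k/\|\mathbf{q}_k\|$ and apply the chain rule. The Jacobian of normalization is $(\mathbf{I}-\mathbf{p}_k\mathbf{p}_k^\top)/\|\mathbf{q}_k\|$. Evaluating at $\|\mathbf{q}_k\|=1$ again yields the projected expression.

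I would state explicitly that the formula in the theorem is read in this constrained sense. In my view this interpretive point is the main obstacle. Without it, the claimed identity is false as a plain Euclidean gradient, since the radial component $\sum_j\langle \mathbf{p}_k,\mathbf{p}_j\rangle^2\mathbf{p}_k$ is dropped.

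\textbf{Step 3 (geometric interpretation).} Note that $\mathbf{p}_j-\langle \mathbf{p}_k,\mathbf{p}_j\rangle\mathbf{p}_k$ is the orthogonal projection of $\mathbf{p}_j$ onto $\mathbf{p}_k^\perp$, which is the tangent space of the sphere at $\mathbf{p}_k$. A gradient-descent step moves $\mathbf{p}_k$ along the negative of the gradient. To first order, this decreases $\langle \mathbf{p}_k,\mathbf{p}_j\rangle^2$ for each $j$, because the step has negative inner product with $\mathbf{p}_j$ whenever $\langle \mathbf{p}_k,\mathbf{p}_j\rangle>0$, and vice versa. This pushes prototypes toward mutual orthogonality along the sphere, and the push is weighted by the current alignment $\langle \mathbf{p}_k,\mathbf{p}_j\rangle$.
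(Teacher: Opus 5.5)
Your proposal is correct and follows the paper's own route: you take the Euclidean derivative $2\langle \mathbf{p}_k,\mathbf{p}_j\rangle\mathbf{p}_j$ and then project onto $T_{\mathbf{p}_k}\mathbb{S}^{D-1}$, and you are more careful than the paper about where the constant $4$ comes from (each unordered pair appears twice in $\sum_{i\neq j}$) and about the formula being the Riemannian rather than the Euclidean gradient. One correction to Step 3: the descent step sums contributions over all $j$, so cross terms can make an individual $\langle \mathbf{p}_k,\mathbf{p}_j\rangle^2$ increase to first order; only the $j$-th summand taken alone points away from $\mathbf{p}_j$, and what is actually guaranteed is that $\mathcal{L}_{\text{div}}$ as a whole decreases, by $\eta\|\nabla_{\mathbf{p}_k}\mathcal{L}_{\text{div}}\|^2$ for a step of size $\eta$.
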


\begin{proof}
For unit vectors, $\cos(\mathbf{p}_i, \mathbf{p}_j) = \langle \mathbf{p}_i, \mathbf{p}_j \rangle$. Differentiating:
\begin{equation}
\frac{\partial}{\partial \mathbf{p}_k} \langle \mathbf{p}_k, \mathbf{p}_j \rangle^2 = 2 \langle \mathbf{p}_k, \mathbf{p}_j \rangle \mathbf{p}_j
\end{equation}
To maintain unit norm, we project onto the tangent space of $\mathbb{S}^{D-1}$ at $\mathbf{p}_k$:
\begin{equation}
\text{proj}_{T_{\mathbf{p}_k}\mathbb{S}^{D-1}}(\mathbf{v}) = \mathbf{v} - \langle \mathbf{v}, \mathbf{p}_k \rangle \mathbf{p}_k
\end{equation}
Applying this projection yields the stated gradient.
\end{proof}

\begin{proposition}[Optimal Prototype Configuration]
\label{prop:optimal_config}
For $M$ prototypes in $\mathbb{R}^D$ with $M \leq D+1$, the global minimum of $\mathcal{L}_{\text{div}}$ is achieved when prototypes form a regular simplex inscribed in $\mathbb{S}^{D-1}$, with pairwise inner products:
\begin{equation}
\langle \mathbf{p}_i, \mathbf{p}_j \rangle = -\frac{1}{M-1} \quad \forall i \neq j
\end{equation}
yielding $\mathcal{L}_{\text{div}}^* = \frac{1}{(M-1)^2}$.
\end{proposition}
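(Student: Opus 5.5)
The natural route is the frame-potential (Welch-type) lower bound rather than a direct variational argument. Stack the unit prototypes into $\mathbf{P}\in\mathbb{R}^{M\times D}$ and let $\mathbf{G}=\mathbf{P}\mathbf{P}^T$ be the Gram matrix. Then $\sum_{i,j}\langle\mathbf{p}_i,\mathbf{p}_j\rangle^2=\|\mathbf{G}\|_F^2$, with $\operatorname{tr}\mathbf{G}=M$ and $\operatorname{rank}\mathbf{G}\le \min(M,D)$. Let $r=\operatorname{rank}\mathbf{G}$ and let $\lambda_1,\dots,\lambda_r$ be its nonzero eigenvalues. Cauchy--Schwarz gives
\begin{equation}
\|\mathbf{G}\|_F^2=\sum_{k=1}^{r}\lambda_k^2\ \ge\ \frac{(\sum_k\lambda_k)^2}{r}=\frac{M^2}{r}.
\end{equation}
Equality holds iff all nonzero eigenvalues are equal, i.e.\ $\mathbf{P}$ is a tight frame for its span. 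Subtracting the $M$ diagonal terms $\langle\mathbf{p}_i,\mathbf{p}_i\rangle^2=1$ and dividing by $M(M-1)$ yields
\begin{equation}
\mathcal{L}_{\text{div}}\ \ge\ \frac{M-\min(M,D)}{\min(M,D)\,(M-1)}.
\end{equation}

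Next, evaluate this bound and compare it with the simplex. In the boundary case $M=D+1$ the bound equals $\frac{1}{D(M-1)}=\frac{1}{(M-1)^2}$. For the regular simplex, every off-diagonal inner product equals $-\frac{1}{M-1}$, so $\mathcal{L}_{\text{div}}=\frac{1}{(M-1)^2}$, and the simplex attains the bound. To confirm the simplex is realizable, take the centered standard basis of $\mathbb{R}^M$, normalize it, and note it lies in the $(M-1)=D$-dimensional hyperplane $\mathbf{1}^\perp$.

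For the equality characterization at $M=D+1$, an optimizer must satisfy two conditions. First, tightness: $\mathbf{G}$ has a single nonzero eigenvalue $M/D$ of multiplicity $D$. So $\mathbf{G}=\frac{M}{D}\Pi$ with $\Pi$ a rank-$D$ projection, i.e.\ $\Pi=I-\mathbf{u}\mathbf{u}^T$ for some unit $\mathbf{u}\in\mathbb{R}^M$. Second, the diagonal entries must be $1$. This forces $u_i^2=1/M$ for all $i$. Hence $\langle\mathbf{p}_i,\mathbf{p}_j\rangle=-\frac{M}{D}u_iu_j=\pm\frac{1}{M-1}$, which is a simplex up to sign flips of individual prototypes. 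This sign ambiguity is the main subtlety. $\mathcal{L}_{\text{div}}$ depends only on squared inner products, so it is invariant under $\mathbf{p}_i\mapsto-\mathbf{p}_i$. The correct conclusion is therefore that the minimizers are the regular simplex up to such sign flips, and I would state it that way.

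The step I expect to be the real obstacle is the range $M\le D$, where the statement as written cannot be proved. There the bound is $0$, and it is attained by any $M$ orthonormal vectors; for example, the first $M$ standard basis vectors give $\mathcal{L}_{\text{div}}=0<\frac{1}{(M-1)^2}$. The simplex is therefore not the global minimizer when $M\le D$. My plan is to prove the result for $M=D+1$ as above and to record explicitly that for $M\le D$ the minimum is $0$, achieved exactly by orthonormal configurations, by the equality case of the same Frobenius bound. Note that this regime covers all prototype banks actually used: $N_i\le 30$ and $D_c=32$. The simplex statement should be restricted to $M=D+1$, or it would hold for $M\le D+1$ only if the loss were changed to penalize signed inner products, e.g.\ $(\sum_i\mathbf{p}_i)$-based, Thomson-type energies.
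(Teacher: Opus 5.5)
Your proposal is correct, and it differs from the paper's proof in the way that matters: it actually proves minimality, which the paper never does.

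The paper starts from the zero-centroid property $\sum_j\mathbf{p}_j=\mathbf{0}$ of a centered regular simplex. Pairing it with $\mathbf{p}_i$ gives $\sum_{j\neq i}\langle\mathbf{p}_i,\mathbf{p}_j\rangle=-1$, and symmetry then gives each inner product as $-1/(M-1)$. That only computes the inner products, and hence the value $1/(M-1)^2$, of a configuration already assumed to be a simplex. There is no lower bound on $\mathcal{L}_{\text{div}}$ over all unit configurations. Moreover, the identity it uses controls the signed quantity $\|\sum_j\mathbf{p}_j\|^2=M+\sum_{i\neq j}\langle\mathbf{p}_i,\mathbf{p}_j\rangle$, not the sum of squares that $\mathcal{L}_{\text{div}}$ penalizes.

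This is why the paper's argument cannot detect your counterexample, which is valid. For $2\le M\le D$, orthonormal prototypes give $\mathcal{L}_{\text{div}}=0<1/(M-1)^2$. So the proposition as stated holds only at $M=D+1$. There, as you show from $\mathbf{G}=\frac{M}{D}(I-\mathbf{u}\mathbf{u}^T)$ with $u_i^2=1/M$, the minimizers are the regular simplex up to flips $\mathbf{p}_i\mapsto-\mathbf{p}_i$.

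Your Gram-matrix route gives more than this repair. $\mathcal{L}_{\text{div}}$ is an affine function of the frame potential $\sum_{i,j}\langle\mathbf{p}_i,\mathbf{p}_j\rangle^2$, and unit-norm tight frames exist in $\mathbb{R}^D$ for every $M\ge D$ (e.g.\ real harmonic frames). Hence your bound $\frac{M-\min(M,D)}{\min(M,D)(M-1)}$ is the exact minimum for every $M$; this is the Benedetto--Fickus frame-potential theorem. The minimizers are orthonormal sets for $M\le D$ and unit-norm tight frames for $M\ge D$.

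The paper's computation contributes only the check that the simplex attains $1/(M-1)^2$. Your centered-standard-basis construction already supplies that check.
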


\begin{proof}
For unit vectors, $\sum_{j=1}^M \mathbf{p}_j = \mathbf{0}$ at the centroid. Taking inner product with $\mathbf{p}_i$:
\begin{equation}
1 + \sum_{j \neq i} \langle \mathbf{p}_i, \mathbf{p}_j \rangle = 0 \implies \sum_{j \neq i} \langle \mathbf{p}_i, \mathbf{p}_j \rangle = -1
\end{equation}
By symmetry of the regular simplex, all off-diagonal inner products are equal: $\langle \mathbf{p}_i, \mathbf{p}_j \rangle = -1/(M-1)$.
\end{proof}

\begin{remark}[Prototype Counts and Linguistic Inventories]
Our prototype counts $(N_{\text{hand}}, N_{\text{loc}}, N_{\text{mov}}, N_{\text{ori}}) = (30, 15, 10, 8)$ are informed by linguistic estimates: ASL has $\sim$30 handshapes \citep{battison1978lexical}, $\sim$12--15 major locations, $\sim$10--15 core movement types, and $\sim$6--8 orientations. These counts closely match the phonological inventories, enabling interpretable component representations.
\end{remark}

\section{Extended Methods}

\subsection{AGAN Architecture Details}

\begin{algorithm}[ht]
\caption{Anatomical Graph Attention Forward Pass}
\label{alg:agan}
\begin{algorithmic}[1]
\REQUIRE Landmarks $\mathbf{X} \in \mathbb{R}^{B \times T \times N \times C}$, adjacency $\mathbf{A}$
\ENSURE Spatial features $\mathbf{Z} \in \mathbb{R}^{B \times T \times D}$
\STATE $\mathbf{H}^{(0)} \leftarrow \text{Linear}(\mathbf{X})$
\FOR{$l = 1$ to $L$}
\STATE Compute attention: $\alpha_{ij} \leftarrow \text{softmax}(\text{LeakyReLU}(\mathbf{a}^T[\mathbf{W}\mathbf{h}_i \| \mathbf{W}\mathbf{h}_j]))$
\STATE Mask by anatomy: $\alpha \leftarrow \alpha \odot \mathbf{A}$
\STATE Aggregate: $\mathbf{h}_i^{(l)} \leftarrow \|_{k=1}^{K} \sum_j \alpha_{ij}^{(k)} \mathbf{W}^{(k)} \mathbf{h}_j^{(l-1)}$
\ENDFOR
\STATE $\mathbf{Z} \leftarrow \text{MeanPool}(\mathbf{H}^{(L)}, \text{dim}=\text{nodes})$
\RETURN $\mathbf{Z}$
\end{algorithmic}
\end{algorithm}
\vspace{-2mm}

\subsection{Anatomical Graph Connectivity}

The hand skeleton defines natural connectivity:
\begin{itemize}[nosep,topsep=3pt,leftmargin=*]
\item \textbf{Finger chains}: Wrist $\rightarrow$ MCP $\rightarrow$ PIP $\rightarrow$ DIP $\rightarrow$ tip for each finger
\item \textbf{Palm connections}: All MCP joints connect to wrist
\item \textbf{Functional groups}: Index-middle and ring-pinky pairs share additional edges
\end{itemize}

\subsection{BiSSM Parameterization Details}
\label{app:bissm}

The discrete selective SSM maintains state $\mathbf{x}_t \in \mathbb{R}^{D_s}$:
\begin{align}
\mathbf{x}_t &= \bar{\mathbf{A}} \mathbf{x}_{t-1} + \bar{\mathbf{B}}_t \mathbf{f}_t \\
\mathbf{y}_t &= \mathbf{C}_t \mathbf{x}_t
\end{align}
where $\bar{\mathbf{A}} = \exp(\Delta_t \mathbf{A})$ and $\bar{\mathbf{B}}_t = (\Delta_t \mathbf{A})^{-1}(\bar{\mathbf{A}} - \mathbf{I})\Delta_t \mathbf{B}_t$.

\textbf{Input-dependent parameters.} The state matrix $\mathbf{A} \in \mathbb{R}^{D_s \times D_s}$ is diagonal with learnable entries initialized via HiPPO \citep{gu2020hippo}:
\begin{align}
\mathbf{B}_t &= \mathbf{W}_B \mathbf{f}_t \in \mathbb{R}^{D_s}, \quad \mathbf{W}_B \in \mathbb{R}^{D_s \times D} \\
\mathbf{C}_t &= \mathbf{W}_C \mathbf{f}_t \in \mathbb{R}^{D_s}, \quad \mathbf{W}_C \in \mathbb{R}^{D_s \times D} \\
\Delta_t &= \text{softplus}(\mathbf{w}_\Delta^T \mathbf{f}_t + b_\Delta) \in \mathbb{R}_{>0}
\end{align}

The selective mechanism allows the model to adaptively control information flow: larger $\Delta_t$ values cause the state to update more aggressively, while smaller values preserve existing state. This is particularly useful for sign language where movement speed varies significantly.

\subsection{Full Forward Pass Algorithm}

\begin{algorithm}[ht]
\caption{PhonSSM Complete Forward Pass}
\label{alg:phonssm_full}
\begin{algorithmic}[1]
\REQUIRE Input landmarks $\mathbf{X} \in \mathbb{R}^{B \times T \times N \times C}$
\ENSURE Logits $\hat{\mathbf{y}} \in \mathbb{R}^{B \times K}$, components $\{\mathbf{c}^{(i)}\}$
\STATE \textbf{// Stage 1: Anatomical Graph Attention}
\FOR{$t = 1$ to $T$}
    \STATE $\mathbf{Z}_t \leftarrow \text{AGAN}(\mathbf{X}_t, \mathbf{A})$ \COMMENT{Alg.~\ref{alg:agan}}
\ENDFOR
\STATE \textbf{// Stage 2: Phonological Factorization}
\FOR{$i \in \{\text{hand}, \text{loc}, \text{mov}, \text{ori}\}$}
    \STATE $\mathbf{C}^{(i)} \leftarrow \text{MLP}_i(\mathbf{Z})$ \COMMENT{$\mathbf{C}^{(i)} \in \mathbb{R}^{B \times T \times D_c}$}
\ENDFOR
\STATE $\tilde{\mathbf{C}}^{(\text{mov})} \leftarrow \mathbf{C}^{(\text{mov})} + \text{Conv1D}(\mathbf{C}^{(\text{mov})})$
\STATE $\mathbf{F} \leftarrow \mathbf{W}_{\text{fuse}}[\mathbf{C}^{(\text{hand})} \| \mathbf{C}^{(\text{loc})} \| \tilde{\mathbf{C}}^{(\text{mov})} \| \mathbf{C}^{(\text{ori})}]$
\STATE \textbf{// Stage 3: Bidirectional SSM}
\STATE $\mathbf{G}_\rightarrow \leftarrow \text{SSM}_\rightarrow(\mathbf{F})$ \COMMENT{Forward pass}
\STATE $\mathbf{G}_\leftarrow \leftarrow \text{SSM}_\leftarrow(\text{flip}(\mathbf{F}))$ \COMMENT{Backward pass}
\STATE $\mathbf{G} \leftarrow \mathbf{W}_{\text{out}}[\mathbf{G}_\rightarrow \| \text{flip}(\mathbf{G}_\leftarrow)]$
\STATE \textbf{// Stage 4: Hierarchical Prototypical Classification}
\STATE $\bar{\mathbf{c}}^{(i)} \leftarrow \frac{1}{T}\sum_t \mathbf{C}^{(i)}_t$ for each component $i$
\STATE $\mathbf{s}^{(i)} \leftarrow \text{softmax}(\bar{\mathbf{c}}^{(i)} (\mathbf{P}^{(i)})^T / \|\cdot\|)$ \COMMENT{Component similarities}
\STATE $\bar{\mathbf{g}} \leftarrow \frac{1}{T}\sum_t \mathbf{G}_t$
\STATE $\mathbf{e} \leftarrow \mathbf{W}_e[\mathbf{s}^{(\text{hand})} \| \mathbf{s}^{(\text{loc})} \| \mathbf{s}^{(\text{mov})} \| \mathbf{s}^{(\text{ori})} \| \bar{\mathbf{g}}]$
\STATE $\hat{\mathbf{y}} \leftarrow \frac{1}{\tau} \cos(\mathbf{e}, \mathbf{P}_{\text{sign}})$
\RETURN $\hat{\mathbf{y}}$, $\{\bar{\mathbf{c}}^{(i)}\}$
\end{algorithmic}
\end{algorithm}

\subsection{SSM Discretization Details}

The continuous-time SSM is defined by:
\begin{equation}
\frac{d\mathbf{x}(t)}{dt} = \mathbf{A}\mathbf{x}(t) + \mathbf{B}\mathbf{u}(t), \quad \mathbf{y}(t) = \mathbf{C}\mathbf{x}(t)
\end{equation}

For discrete inputs sampled at intervals $\Delta$, the zero-order hold (ZOH) discretization yields:
\begin{align}
\bar{\mathbf{A}} &= \exp(\Delta \mathbf{A}) \\
\bar{\mathbf{B}} &= (\Delta \mathbf{A})^{-1}(\exp(\Delta \mathbf{A}) - \mathbf{I}) \cdot \Delta \mathbf{B}
\end{align}

For diagonal $\mathbf{A}$ (as in our implementation), this simplifies to element-wise operations, enabling efficient parallel computation.

\subsection{Implementation Details}

\begin{table}[ht]
\caption{Full hyperparameter configuration.}
\label{tab:hyperparams}
\vspace{-1mm}
\begin{center}
\begin{small}
\begin{tabular*}{\columnwidth}{@{\extracolsep{\fill}}ll@{}}
\toprule
Hyperparameter & Value \\
\midrule
\multicolumn{2}{l}{\textit{Architecture}} \\
Model dimension $D$ & 128 \\
Component dimension $D_c$ & 32 \\
GAT heads & 4 \\
GAT layers & 3 \\
SSM layers & 4 \\
SSM state dimension & 16 \\
SSM expansion factor & 2 \\
Dropout & 0.1 \\
\midrule
\multicolumn{2}{l}{\textit{Training}} \\
Optimizer & AdamW \\
Learning rate & $3 \times 10^{-4}$ \\
Weight decay & $10^{-2}$ \\
Batch size & 128 \\
Epochs & 100 \\
Warmup epochs & 10 \\
LR schedule & Cosine decay \\
Label smoothing & 0.1 \\
\midrule
\multicolumn{2}{l}{\textit{Loss weights}} \\
$\lambda_{\text{ortho}}$ & 0.1 \\
$\lambda_{\text{div}}$ & 0.01 \\
\bottomrule
\end{tabular*}
\end{small}
\end{center}
\vspace{-2mm}
\end{table}

\section{Dataset Details}
\label{app:datasets}

\textbf{WLASL} \citep{li2020word} contains videos of isolated ASL signs performed by over 100 signers. We use the official train/test splits. Pose extraction uses MediaPipe Holistic, providing 33 pose landmarks, 21 left hand landmarks, and 21 right hand landmarks (75 total $\times$ 3 coords = 225 features). \textbf{Important:} For WLASL evaluation, we train \textit{four separate \phonssm{} models from scratch}---one for each vocabulary split (100, 300, 1000, 2000)---using only that split's training data. These are completely independent from the Merged-5565 model.

\textbf{Merged-5565} is a new large-scale dataset we construct by merging six publicly available ASL sources, detailed in Table~\ref{tab:merged_sources}.

\begin{table}[ht]
\caption{Composition of the Merged-5565 dataset.}
\label{tab:merged_sources}
\vspace{-1mm}
\begin{center}
\begin{small}
\begin{tabular}{lrrl}
\toprule
Source Dataset & Samples & Signs & Type \\
\midrule
ASL Citizen \citep{desai2023asl} & 83,399 & 2,731 & Isolated \\
WLASL \citep{li2020word} & 21,083 & 2,000 & Isolated \\
MVP/Kaggle ASL-Signs$^a$ & 94,477 & 250 & Isolated \\
\addlinespace
ASL Alphabet$^b$ & 27,455 & 29 & Fingerspell \\
ASL MNIST$^b$ & 27,455 & 26 & Fingerspell \\
ChicagoFSWild \citep{shi2019fingerspelling} & 5,846 & 26 & Fingerspell \\
\midrule
\textbf{Total (deduplicated)} & \textbf{259,715} & \textbf{5,565} & --- \\
\bottomrule
\end{tabular}
\end{small}
\end{center}
\vspace{1mm}
{\footnotesize $^a$Google Kaggle competition dataset. $^b$Kaggle community datasets. Note: Sign counts before deduplication. WLASL/MVP share ${\sim}$200 signs with ASL Citizen; fingerspelling datasets share 26 letters.}
\vspace{-2mm}
\end{table}

We create a unified label map by alphabetically sorting all unique signs, remap dataset-specific indices, and merge with stratified train/val/test splits (196,606/31,551/31,558 samples, approximately 76/12/12\%). \textbf{Important:} For Merged-5565 evaluation, we train a \textit{single separate \phonssm{} model} using dominant-hand input only (21 landmarks $\times$ 3 coords = 63 features), selected by motion magnitude. This model is completely independent from the four WLASL models.

\textbf{Preprocessing.} All sequences are:
\begin{enumerate}[nosep,topsep=3pt,leftmargin=*]
\item Centered by subtracting wrist position
\item Normalized to unit scale based on palm size
\item Resampled to 30 frames using linear interpolation
\item Augmented during training with random temporal shifts ($\pm$3 frames) and scale jitter ($\pm$10\%)
\end{enumerate}

\section{Additional Results}

\subsection{Per-Class Analysis}

\begin{table*}[ht]
\caption{Performance breakdown by phonological characteristics on WLASL100. $\Delta$: improvement over Bi-LSTM.}
\label{tab:perclass}
\vspace{-1mm}
\begin{center}
\begin{small}
\begin{tabular*}{\textwidth}{@{\extracolsep{\fill}}lcccc@{}}
\toprule
Category & \# Signs & Bi-LSTM & \phonssm{} & $\Delta$ \\
\midrule
One-handed signs & 62 & 71.2 & \textbf{89.4} & +18.2 \\
Two-handed signs & 38 & 68.9 & \textbf{86.8} & +17.9 \\
\midrule
Static (no movement) & 15 & 73.4 & \textbf{91.2} & +17.8 \\
Dynamic (with movement) & 85 & 69.8 & \textbf{87.9} & +18.1 \\
\midrule
Face-region location & 28 & 65.2 & \textbf{85.6} & +20.4 \\
Body-region location & 31 & 71.8 & \textbf{89.1} & +17.3 \\
Neutral space & 41 & 72.4 & \textbf{90.2} & +17.8 \\
\bottomrule
\end{tabular*}
\end{small}
\end{center}
\vspace{-2mm}
\end{table*}

\subsection{Confusion Analysis}

Common confusions occur between phonologically similar signs:
\begin{itemize}[nosep,topsep=3pt,leftmargin=*]
\item \textbf{Location minimal pairs}: ``mother''/``father'' (chin/forehead) -- 8\% confusion rate
\item \textbf{Handshape minimal pairs}: ``water''/``want'' (W/claw) -- 6\% confusion rate
\item \textbf{Movement minimal pairs}: ``chair''/``sit'' (double/single) -- 5\% confusion rate
\end{itemize}
\vspace{1mm}
These errors are linguistically meaningful, suggesting the model has learned relevant phonological contrasts even when making mistakes.

\subsection{Statistical Significance Analysis}

We conduct rigorous statistical testing to validate our results.

\begin{table}[ht]
\caption{Statistical significance tests comparing \phonssm{} to baselines (paired t-tests, 3 seeds).}
\label{tab:significance}
\vspace{-1mm}
\begin{center}
\begin{small}
\begin{tabular*}{\columnwidth}{@{\extracolsep{\fill}}lccc@{}}
\toprule
Comparison & $\Delta$ Acc. & $t$-statistic & $p$-value \\
\midrule
\multicolumn{4}{l}{\textit{WLASL100}} \\
vs. DSTA-SLR & +4.81 & 8.7 & $<$0.01 \\
vs. Pose-TGCN & +14.18 & 28.4 & $<$0.001 \\
vs. Bi-LSTM & +18.21 & 35.2 & $<$0.001 \\
\midrule
\multicolumn{4}{l}{\textit{WLASL2000}} \\
vs. DSTA-SLR & +18.38 & 22.1 & $<$0.001 \\
vs. I3D & +39.60 & 41.8 & $<$0.001 \\
\midrule
\multicolumn{4}{l}{\textit{Merged-5565}} \\
vs. Bi-LSTM & +25.95 & 52.3 & $<$0.001 \\
\bottomrule
\end{tabular*}
\end{small}
\end{center}
\vspace{-2mm}
\end{table}

All improvements are statistically significant. Note: the large $t$-statistics reflect both substantial effect sizes (e.g., +25.95pp for Merged-5565) and low variance across seeds (std $<$0.5pp), yielding Cohen's $d > 50$ for some comparisons. We also compute 95\% confidence intervals: WLASL100 accuracy is $88.37 \pm 0.82\%$, WLASL2000 is $72.08 \pm 1.27\%$, and Merged-5565 is $53.34 \pm 0.74\%$.

\subsection{Error Stratification by Phonological Distance}

We analyze errors as a function of phonological similarity between ground-truth and predicted signs.

\begin{table}[ht]
\caption{Error analysis by phonological distance (number of differing components).}
\label{tab:error_distance}
\vspace{-1mm}
\begin{center}
\begin{small}
\begin{tabular*}{\columnwidth}{@{\extracolsep{\fill}}lccc@{}}
\toprule
Components Shared & \% of Errors & Bi-LSTM & \phonssm{} \\
\midrule
4 (identical) & -- & -- & -- \\
3 (minimal pair) & 31.2\% & 28.4\% & 38.7\% \\
2 & 40.8\% & 35.1\% & 33.2\% \\
1 & 19.7\% & 22.8\% & 18.4\% \\
0 (unrelated) & 8.3\% & 13.7\% & 9.7\% \\
\bottomrule
\end{tabular*}
\end{small}
\end{center}
\vspace{-2mm}
\end{table}

\phonssm{} concentrates errors on minimal pairs (38.7\% vs 28.4\% for Bi-LSTM), indicating the model has learned to distinguish coarse phonological categories but struggles with fine-grained contrasts---a linguistically sensible error pattern.

\subsection{Prototype Visualization}

The learned component prototypes correspond to interpretable phonological categories. Handshape prototypes cluster by finger configuration (fist, open, pointing). Location prototypes organize spatially (face, torso, neutral space). Movement prototypes distinguish trajectory types (linear, arc, repeated).

\section{Mid-Vocabulary Performance Analysis}
\label{app:midvocab}

We investigate why \phonssm{} underperforms DSTA-SLR on WLASL300 ($-$5.6pp) and WLASL1000 ($-$4.9pp) while substantially outperforming on WLASL100 (+4.8pp) and WLASL2000 (+18.4pp).

\subsection{Minimal Pair Density Analysis}

We computed the \textit{phonological similarity density} for each WLASL split by annotating signs with ASL-LEX phonological features and measuring the fraction of sign pairs sharing 3+ of 4 components (``near-minimal pairs'').

\begin{table}[ht]
\caption{Phonological similarity density across WLASL vocabulary splits.}
\label{tab:phon_density}
\vspace{-1mm}
\begin{center}
\begin{small}
\begin{tabular*}{\columnwidth}{@{\extracolsep{\fill}}lcccc@{}}
\toprule
Split & Signs & Near-Minimal Pairs & Density & \phonssm{} $\Delta$ \\
\midrule
WLASL100 & 100 & 891 & 18.0\% & +4.8pp \\
WLASL300 & 300 & 15,283 & 34.1\% & $-$5.6pp \\
WLASL1000 & 1,000 & 142,450 & 28.5\% & $-$4.9pp \\
WLASL2000 & 2,000 & 279,314 & 14.0\% & +18.4pp \\
\bottomrule
\end{tabular*}
\end{small}
\end{center}
\vspace{-2mm}
\end{table}

\textbf{Finding:} WLASL300 has the highest minimal pair density (34.1\%), followed by WLASL1000 (28.5\%). This occurs because WLASL vocabulary expansion prioritizes related concepts (e.g., adding ``GRANDMOTHER'' after ``MOTHER'', ``FATHER''), which tend to be phonologically similar.

\subsection{Method Comparison by Minimal Pair Performance}

We stratified test accuracy by phonological similarity to nearest training sign.

\begin{table}[ht]
\caption{Accuracy (\%) stratified by phonological distance to nearest training neighbor.}
\label{tab:method_by_distance}
\vspace{-1mm}
\begin{center}
\begin{small}
\begin{tabular*}{\columnwidth}{@{\extracolsep{\fill}}lcccc@{}}
\toprule
Components Shared & DSTA-SLR & \phonssm{} & $\Delta$ \\
\midrule
\multicolumn{4}{l}{\textit{WLASL300}} \\
0--1 (distinct) & 72.4 & \textbf{78.9} & +6.5 \\
2 (moderate) & 81.2 & 76.3 & $-$4.9 \\
3--4 (minimal pair) & \textbf{84.1} & 71.8 & $-$12.3 \\
\midrule
\multicolumn{4}{l}{\textit{WLASL2000}} \\
0--1 (distinct) & 48.2 & \textbf{74.6} & +26.4 \\
2 (moderate) & 55.8 & \textbf{71.2} & +15.4 \\
3--4 (minimal pair) & \textbf{61.3} & 68.4 & +7.1 \\
\bottomrule
\end{tabular*}
\end{small}
\end{center}
\vspace{-2mm}
\end{table}

\textbf{Interpretation:} DSTA-SLR excels at distinguishing minimal pairs via fine-grained spatiotemporal attention, while \phonssm{} excels at compositional generalization to phonologically distinct signs. At large vocabularies (WLASL2000), most test signs are phonologically distinct from training signs, favoring \phonssm{}. At mid-range vocabularies (WLASL300), the dense minimal pair structure favors DSTA-SLR's discriminative attention.

\subsection{Implications}

This analysis suggests \phonssm{} and DSTA-SLR capture complementary information. Future work could explore ensemble methods or incorporating DSTA-SLR's spatiotemporal attention within the phonological framework.

\section{Component-Level Validation}
\label{app:component_validation}

We provide detailed evidence that PDM branches capture their intended phonological categories.

\subsection{Phonological Annotation}

We annotated 500 WLASL100 test samples with ground-truth phonological labels using ASL-LEX 2.0 \citep{caselli2017asl}:
\begin{itemize}[nosep,topsep=2pt,leftmargin=*]
\item \textbf{Handshape}: 30 categories (e.g., ``1'' (index), ``5'' (spread), ``A'' (fist), ``B'' (flat), ``C'' (curved))
\item \textbf{Location}: 12 categories (e.g., forehead, chin, chest, neutral space, ipsilateral)
\item \textbf{Movement}: 15 categories (e.g., none, arc, circle, linear, zigzag, repeated)
\item \textbf{Orientation}: 8 categories (e.g., palm-up, palm-down, palm-in, palm-out)
\end{itemize}

\textbf{Annotation quality.} Two annotators independently labeled all samples; inter-annotator agreement was 94.2\% for handshape, 91.8\% for location, 87.4\% for movement, and 89.6\% for orientation (Cohen's $\kappa > 0.85$ for all). Disagreements were resolved by consensus. For signs with phonological variation (e.g., DEAF produced chin-to-ear or ear-to-chin), we annotated the variant observed in each video rather than a canonical form. ASL-LEX covered 96/100 WLASL100 signs; the remaining 4 were annotated using the same feature scheme by the annotators.

\subsection{Linear Probe Methodology}

For each PDM branch, we:
\begin{enumerate}[nosep,topsep=2pt]
\item Extract the time-averaged component embedding $\bar{\mathbf{c}}^{(i)} \in \mathbb{R}^{32}$
\item Train a linear classifier (logistic regression) to predict each phonological category
\item Report accuracy on held-out samples (5-fold cross-validation)
\end{enumerate}

\subsection{Full Results}

\begin{table}[ht]
\caption{Complete linear probe results. Each row shows one PDM branch; each column shows one phonological prediction task. Diagonal entries (bold) indicate intended correspondences.}
\label{tab:full_probe}
\vspace{-1mm}
\begin{center}
\begin{small}
\begin{tabular*}{\columnwidth}{@{\extracolsep{\fill}}lcccc@{}}
\toprule
& \multicolumn{4}{c}{Prediction Target} \\
\cmidrule(lr){2-5}
PDM Branch & Handshape & Location & Movement & Orientation \\
\midrule
Handshape & \textbf{78.4} & 31.2 & 24.8 & 38.1 \\
Location & 29.6 & \textbf{71.8} & 22.4 & 35.7 \\
Movement & 26.3 & 28.9 & \textbf{68.2} & 31.4 \\
Orientation & 33.8 & 32.1 & 25.6 & \textbf{74.6} \\
\midrule
Random baseline & 3.3 & 8.3 & 6.7 & 12.5 \\
Full embedding & 81.2 & 74.6 & 71.8 & 78.3 \\
\bottomrule
\end{tabular*}
\end{small}
\end{center}
\vspace{-2mm}
\end{table}

\textbf{Key observations:}
\begin{enumerate}[nosep,topsep=2pt]
\item \textbf{Specialization}: Each branch achieves highest accuracy on its intended category (diagonal), confirming semantic correspondence.
\item \textbf{Above-chance cross-prediction}: Off-diagonal entries exceed chance, indicating some phonological information leaks across branches. This is expected since components are correlated in natural signs (e.g., certain handshapes occur more often at certain locations).
\item \textbf{Factorization benefit}: The gap between diagonal and off-diagonal (e.g., 78.4 vs 31.2 for handshape) demonstrates effective factorization.
\item \textbf{Factorization-accuracy trade-off}: The ``Full embedding'' row shows slightly higher accuracy than individual branches (e.g., 81.2 vs 78.4 for handshape), indicating $\sim$3pp is sacrificed for factorization. We experimented with relaxing $\lambda_{\text{ortho}}$ from 0.1 to 0.05: component probe accuracy improved by $\sim$2pp but sign-level accuracy dropped by 1.5pp due to increased redundancy. The current setting balances interpretability and accuracy.
\end{enumerate}

\subsection{Prototype Interpretability}

We visualized which learned prototypes correspond to which linguistic categories by computing the mean activation of each prototype for samples with known phonological labels.

\begin{table}[ht]
\caption{Top-3 handshape prototypes activated by each major handshape category.}
\label{tab:proto_interp}
\vspace{-1mm}
\begin{center}
\begin{small}
\begin{tabular}{lccc}
\toprule
Linguistic Category & Proto \#1 & Proto \#2 & Proto \#3 \\
\midrule
``1'' (index point) & P7 (0.89) & P12 (0.42) & P3 (0.18) \\
``5'' (spread hand) & P2 (0.91) & P15 (0.38) & P8 (0.21) \\
``A'' (fist) & P19 (0.87) & P4 (0.45) & P11 (0.19) \\
``B'' (flat hand) & P2 (0.72) & P8 (0.51) & P15 (0.32) \\
``C'' (curved) & P23 (0.83) & P7 (0.28) & P12 (0.24) \\
\bottomrule
\end{tabular}
\end{small}
\end{center}
\vspace{-2mm}
\end{table}

Distinct prototypes dominate for distinct handshapes (P7 for ``1'', P19 for ``A'', P23 for ``C''). Some prototypes are shared across similar handshapes (P2 activates for both ``5'' and ``B'', which share an extended-finger configuration).

\subsection{Intervention Experiment}

To verify causal relationship, we performed an intervention: replacing one component embedding with that of a different sign while keeping others fixed.

\textbf{Setup:} We identified 47 minimal pairs in WLASL100 (sign pairs differing in exactly one phonological component). For each pair (e.g., MOTHER/FATHER differing only in location), we take a sample of sign A, swap only the differing component embedding with that from sign B, and measure whether the prediction changes to B.

\textbf{Result ($n=423$ interventions, 95\% CI):} Swapping the differing component changes the prediction to the minimal pair \textbf{73.2\% $\pm$ 4.2\%} of the time. Swapping a non-differing component (control condition) changes prediction only \textbf{12.4\% $\pm$ 3.1\%} of the time. The difference is significant ($p < 0.001$, McNemar's test). This confirms that component embeddings causally determine predictions in the expected manner.

\textbf{Two-component swaps:} When swapping two components simultaneously, prediction changes to a sign sharing those two swapped components 61.8\% of the time, demonstrating compositional behavior.

\section{Extended Ablations}
\label{app:ablations}

\subsection{Component-wise Ablation}

\begin{table}[ht]
\caption{Detailed ablation on WLASL100 (mean $\pm$ std over 3 seeds).}
\label{tab:ablation_detail}
\vspace{-1mm}
\begin{center}
\begin{small}
\begin{tabular*}{\columnwidth}{@{\extracolsep{\fill}}lcc@{}}
\toprule
Configuration & Top-1 (\%) & $\Delta$ \\
\midrule
Full \phonssm{} & 88.37 $\pm$ 0.42 & -- \\
\midrule
\multicolumn{3}{l}{\textit{Architecture ablations}} \\
w/o PDM (no factorization) & 76.49 $\pm$ 0.83 & $-$11.9 \\
w/o AGAN (MLP encoder) & 79.84 $\pm$ 0.71 & $-$8.5 \\
w/o BiSSM (LSTM temporal) & 82.17 $\pm$ 0.65 & $-$6.2 \\
w/o HPC (linear classifier) & 84.11 $\pm$ 0.58 & $-$4.3 \\
\midrule
\multicolumn{3}{l}{\textit{Loss ablations}} \\
w/o $\mathcal{L}_{\text{ortho}}$ & 85.92 $\pm$ 0.55 & $-$2.5 \\
w/o $\mathcal{L}_{\text{div}}$ & 86.84 $\pm$ 0.48 & $-$1.5 \\
w/o both auxiliary losses & 83.21 $\pm$ 0.72 & $-$5.2 \\
\midrule
\multicolumn{3}{l}{\textit{Input ablations}} \\
Hands only (no pose) & 85.63 $\pm$ 0.61 & $-$2.7 \\
Dominant hand only & 81.42 $\pm$ 0.78 & $-$7.0 \\
2D coordinates only & 84.29 $\pm$ 0.69 & $-$4.1 \\
\bottomrule
\end{tabular*}
\end{small}
\end{center}
\vspace{-2mm}
\end{table}

\subsection{Architecture Variants}

\begin{table}[ht]
\caption{Architecture scaling on WLASL100.}
\label{tab:arch_scale}
\vspace{-1mm}
\begin{center}
\begin{small}
\begin{tabular*}{\columnwidth}{@{\extracolsep{\fill}}lccc@{}}
\toprule
Configuration & Params & Top-1 (\%) & Throughput \\
\midrule
\multicolumn{4}{l}{\textit{Model dimension $D$}} \\
$D = 64$ & 0.9M & 85.21 & 412/s \\
$D = 128$ (default) & 3.2M & 88.37 & 260/s \\
$D = 256$ & 11.8M & 89.02 & 98/s \\
\midrule
\multicolumn{4}{l}{\textit{Number of BiSSM layers}} \\
2 layers & 2.1M & 86.45 & 318/s \\
4 layers (default) & 3.2M & 88.37 & 260/s \\
6 layers & 4.3M & 88.72 & 205/s \\
\midrule
\multicolumn{4}{l}{\textit{Number of GAT heads}} \\
2 heads & 2.9M & 87.18 & 275/s \\
4 heads (default) & 3.2M & 88.37 & 260/s \\
8 heads & 3.8M & 88.54 & 241/s \\
\bottomrule
\end{tabular*}
\end{small}
\end{center}
\vspace{-2mm}
\end{table}

\section{Hyperparameter Sensitivity}
\label{app:sensitivity}

\subsection{Loss Weight Sensitivity}

\begin{table}[ht]
\caption{Sensitivity to loss weights on WLASL100.}
\label{tab:loss_sensitivity}
\vspace{-1mm}
\begin{center}
\begin{small}
\begin{tabular*}{\columnwidth}{@{\extracolsep{\fill}}ccc@{}}
\toprule
$\lambda_{\text{ortho}}$ & $\lambda_{\text{div}}$ & Top-1 (\%) \\
\midrule
0.01 & 0.01 & 86.42 \\
0.05 & 0.01 & 87.63 \\
\textbf{0.1} & \textbf{0.01} & \textbf{88.37} \\
0.2 & 0.01 & 87.91 \\
0.5 & 0.01 & 86.18 \\
\midrule
0.1 & 0.001 & 87.82 \\
0.1 & 0.005 & 88.15 \\
\textbf{0.1} & \textbf{0.01} & \textbf{88.37} \\
0.1 & 0.05 & 87.54 \\
0.1 & 0.1 & 85.93 \\
\bottomrule
\end{tabular*}
\end{small}
\end{center}
\vspace{-2mm}
\end{table}

The orthogonality loss $\lambda_{\text{ortho}} = 0.1$ provides optimal factorization. Values too low ($< 0.05$) allow component redundancy; values too high ($> 0.2$) over-constrain representations. The diversity loss is less sensitive, with $\lambda_{\text{div}} \in [0.005, 0.01]$ performing similarly.

\subsection{Learning Rate Sensitivity}

\begin{table}[ht]
\caption{Learning rate sensitivity on WLASL100.}
\label{tab:lr_sensitivity}
\vspace{-1mm}
\begin{center}
\begin{small}
\begin{tabular*}{\columnwidth}{@{\extracolsep{\fill}}lcc@{}}
\toprule
Learning Rate & Top-1 (\%) & Convergence \\
\midrule
$1 \times 10^{-4}$ & 86.82 & 95 epochs \\
$2 \times 10^{-4}$ & 87.91 & 78 epochs \\
$\mathbf{3 \times 10^{-4}}$ & \textbf{88.37} & \textbf{65 epochs} \\
$5 \times 10^{-4}$ & 87.63 & 52 epochs \\
$1 \times 10^{-3}$ & 85.41 & 41 epochs \\
\bottomrule
\end{tabular*}
\end{small}
\end{center}
\vspace{-2mm}
\end{table}

\subsection{Prototype Count Sensitivity}

\begin{table}[ht]
\caption{Effect of prototype counts on WLASL100.}
\label{tab:proto_sensitivity}
\vspace{-1mm}
\begin{center}
\begin{small}
\begin{tabular*}{\columnwidth}{@{\extracolsep{\fill}}ccccc@{}}
\toprule
$N_h$ & $N_l$ & $N_m$ & $N_o$ & Top-1 (\%) \\
\midrule
15 & 8 & 5 & 4 & 86.12 \\
20 & 10 & 8 & 6 & 87.45 \\
\textbf{30} & \textbf{15} & \textbf{10} & \textbf{8} & \textbf{88.37} \\
40 & 20 & 15 & 10 & 88.52 \\
50 & 25 & 20 & 12 & 88.48 \\
\bottomrule
\end{tabular*}
\end{small}
\end{center}
\vspace{-2mm}
\end{table}

Performance saturates around $(N_h, N_l, N_m, N_o) = (30, 15, 10, 8)$. These counts closely match linguistic estimates of ASL phonological inventories \citep{battison1978lexical}: $\sim$30 handshapes, $\sim$12--15 locations, $\sim$10 core movements, and $\sim$8 orientations. Larger counts provide marginal gains (+0.1--0.15pp) but increase parameters without improving interpretability.

\section{Training Dynamics}
\label{app:training}

\subsection{Convergence Analysis}

We analyze the training dynamics of \phonssm{} and its components.

\begin{table}[ht]
\caption{Training convergence metrics on WLASL100.}
\label{tab:convergence}
\vspace{-1mm}
\begin{center}
\begin{small}
\begin{tabular*}{\columnwidth}{@{\extracolsep{\fill}}lccc@{}}
\toprule
Metric & Epoch 25 & Epoch 50 & Epoch 100 \\
\midrule
Train Loss & 2.14 & 0.89 & 0.42 \\
Val Loss & 2.31 & 1.12 & 0.68 \\
Train Acc (\%) & 52.3 & 78.6 & 94.2 \\
Val Acc (\%) & 48.1 & 74.2 & 88.4 \\
$\mathcal{L}_{\text{ortho}}$ & 0.42 & 0.18 & 0.08 \\
$\mathcal{L}_{\text{div}}$ & 0.31 & 0.12 & 0.05 \\
\bottomrule
\end{tabular*}
\end{small}
\end{center}
\vspace{-2mm}
\end{table}

The orthogonality loss decreases steadily, indicating progressive factorization of the component subspaces. The gap between training and validation accuracy remains small ($\sim$6pp at convergence), suggesting good generalization.

\subsection{Component Learning Dynamics}

Individual phonological components converge at different rates:

\begin{itemize}[nosep,topsep=3pt,leftmargin=*]
\item \textbf{Handshape}: Fastest convergence (stabilizes by epoch 40); handshape is the most visually distinctive component with clear finger configurations.
\item \textbf{Location}: Moderate convergence (epoch 60); requires learning spatial relationships relative to body landmarks.
\item \textbf{Movement}: Slowest convergence (epoch 80); movement patterns require temporal integration across multiple frames.
\item \textbf{Orientation}: Fast convergence (epoch 45); palm orientation has relatively few categories (8 prototypes).
\end{itemize}

\subsection{Loss Landscape Analysis}

We analyze the loss landscape by computing the Hessian eigenspectrum at convergence. The top eigenvalues are: $\lambda_1 = 12.4$, $\lambda_2 = 8.7$, $\lambda_3 = 5.2$, with rapid decay thereafter. The condition number $\kappa = \lambda_{\max}/\lambda_{\min} \approx 10^3$ indicates a well-conditioned optimization landscape, explaining the stable training dynamics.

\section{Limitations}
\label{app:limitations}

We discuss limitations of \phonssm{} in detail to guide future research.

\subsection{Methodological Limitations}

\textbf{Isolated sign assumption.} \phonssm{} processes signs in isolation, assuming clean segmentation. Continuous signing involves co-articulation effects where adjacent signs influence each other's production, sign boundaries are ambiguous, and prosodic structure spans multiple signs. Extending to continuous recognition requires explicit segmentation or sequence-to-sequence modeling.

\textbf{Fixed phonological structure.} We adopt the classical Stokoe-Battison four-parameter model (handshape, location, movement, orientation). Alternative linguistic analyses propose:
\begin{itemize}[nosep,topsep=2pt,leftmargin=*]
\item Autosegmental phonology with separate tiers for manual and non-manual features
\item Prosodic structure including syllables and metrical feet
\item Feature geometry with hierarchical organization of sub-components
\end{itemize}
Learned decompositions might discover more effective factorizations than hand-specified parameters.

\textbf{Static prototypes.} Component prototypes are fixed after training. Dynamic or instance-adaptive prototypes could better handle signer variation and novel phonological realizations.

\subsection{Evaluation Limitations}

\textbf{ASL-centric evaluation.} All experiments use American Sign Language. While phonological principles (simultaneity, minimal pairs, compositional structure) are cross-linguistic, specific inventories differ:
\begin{itemize}[nosep,topsep=2pt,leftmargin=*]
\item British Sign Language (BSL) uses different handshape inventory
\item Chinese Sign Language has location contrasts not present in ASL
\item Some sign languages distinguish two-handed vs. one-handed more strictly
\end{itemize}
Prototype counts and architectural choices may require adaptation for other sign languages.

\textbf{Dataset biases.} Training data comes primarily from controlled recording settings with:
\begin{itemize}[nosep,topsep=2pt,leftmargin=*]
\item Adult native/fluent signers (under-representing learners, children, elderly)
\item Neutral backgrounds and good lighting
\item Citation-form signs (isolated, careful production)
\end{itemize}

\subsection{Deployment Limitations}

\textbf{Pose estimation dependency.} \phonssm{} assumes high-quality pose landmarks from MediaPipe. Real-world degradation includes:
\begin{itemize}[nosep,topsep=2pt,leftmargin=*]
\item Occlusion (self-occlusion, objects, other people)
\item Motion blur during rapid movements
\item Challenging lighting (backlighting, low light)
\item Camera angles different from training distribution
\end{itemize}

\textbf{Signer variation.} Models may not generalize to:
\begin{itemize}[nosep,topsep=2pt,leftmargin=*]
\item Signers with motor differences affecting articulation
\item Regional/dialectal variation in sign production
\item Non-native signers with L1 transfer effects
\end{itemize}

\section{Future Directions}
\label{app:future}

\textbf{Continuous sign language recognition.} Extending \phonssm{} to continuous signing requires: (1) implicit or explicit segmentation, (2) handling co-articulation, and (3) modeling sentence-level prosody. The phonological decomposition could inform CTC-style losses with component-level intermediate representations.

\textbf{Cross-linguistic transfer.} The phonological factorization may enable transfer learning across sign languages. Shared handshape or movement prototypes could bootstrap recognition for low-resource sign languages.

\textbf{Multi-modal fusion.} Combining skeleton input with facial landmarks (for non-manual markers) and RGB features (for fine-grained handshape) could address current limitations while preserving efficiency.

\textbf{Learned phonological structure.} Replacing hand-specified component pathways with learned factorization (e.g., via neural architecture search or information-theoretic objectives) might discover more effective decompositions.

\textbf{Real-time applications.} With 260 samples/second throughput, \phonssm{} supports real-time deployment. Future work includes: mobile optimization, streaming recognition, and integration with sign language translation systems.

\end{document}